\newtheorem{theorem}{Theorem}%  meant for continuous numbers
\journal{Neural Networks}
\begin{document}

\begin{frontmatter}

%% Title, authors and addresses

%% use the tnoteref command within \title for footnotes;
%% use the tnotetext command for theassociated footnote;
%% use the fnref command within \author or \affiliation for footnotes;
%% use the fntext command for theassociated footnote;
%% use the corref command within \author for corresponding author footnotes;
%% use the cortext command for theassociated footnote;
%% use the ead command for the email address,
%% and the form \ead[url] for the home page:
%% \title{Title\tnoteref{label1}}
%% \tnotetext[label1]{}
%% \author{Name\corref{cor1}\fnref{label2}}
%% \ead{email address}
%% \ead[url]{home page}
%% \fntext[label2]{}
%% \cortext[cor1]{}
%% \affiliation{organization={},
%%            addressline={}, 
%%            city={},
%%            postcode={}, 
%%            state={},
%%            country={}}
%% \fntext[label3]{}

\title{Efficient Adaptive Label Refinement for Label Noise Learning} %% Article title

%% use optional labels to link authors explicitly to addresses:
%% \author[label1,label2]{}
%% \affiliation[label1]{organization={},
%%             addressline={},
%%             city={},
%%             postcode={},
%%             state={},
%%             country={}}
%%
%% \affiliation[label2]{organization={},
%%             addressline={},
%%             city={},
%%             postcode={},
%%             state={},
%%             country={}}

\author[1,2,3]{Wenzhen Zhang} 

\author[4]{Debo Cheng} 

\author[1,2,3]{Guangquan Lu}

\author[5]{Bo Zhou} 

\author[6]{Jiaye Li} 

\author[1,2,3]{Shichao Zhang\corref{cor1}}
\ead{zhangsc@mailbox.gxnu.edu.cn}
%% Author affiliation
\affiliation[1]{organization={School of Computer Science and Engineering},%Department and Organization
            addressline={Guangxi Normal University}, 
            city={Guilin},
            postcode={541004}, 
            state={Guangxi},
            country={China}}
            
\affiliation[2]{organization={Guangxi Key Lab of Multi-Source Information Mining \& Security},%Department and Organization
            addressline={Guangxi Normal University}, 
            city={Guilin},
            postcode={541004}, 
            state={Guangxi},
            country={China}}
            
\affiliation[3]{organization={Key Lab of Education Blockchain and Intelligent Technology, Ministry of Education},%Department and Organization
            addressline={Guangxi Normal University}, 
            city={Guilin},
            postcode={541004}, 
            state={Guangxi},
            country={China}}
            
\affiliation[4]{organization={UniSA STEM},%Department and Organization
            addressline={University of South Australia}, 
            city={Adelaide},
            postcode={5095}, 
            state={SA},
            country={Australia}}

\affiliation[5]{organization={Guangxi Collaborative Innovation Center of Modern Sericulture and Silk},%Department and Organization
            addressline={Hechi University}, 
            city={Hechi},
            postcode={546300}, 
            state={Guangxi},
            country={China}}

\affiliation[6]{organization={the State Key Laboratory of Blockchain
and Data Security},%Department and Organization
            addressline={Zhejiang University}, 
            city={Hangzhou},
            postcode={310027}, 
            state={Zhejiang},
            country={China}
            }

\cortext[cor1]{Corresponding author.}

%% Abstract
\begin{abstract}
%% Text of abstract
Deep neural networks are highly susceptible to overfitting noisy labels, which leads to degraded performance. Existing methods address this issue by employing manually defined criteria, aiming to achieve optimal partitioning in each iteration to avoid fitting noisy labels while thoroughly learning clean samples. However, this often results in overly complex and difficult-to-train models. To address this issue, we decouple the tasks of avoiding fitting incorrect labels and thoroughly learning clean samples and propose a simple yet highly applicable method called Adaptive Label Refinement (ALR). First, inspired by label refurbishment techniques, we update the original hard labels to soft labels using the model's predictions to reduce the risk of fitting incorrect labels. Then, by introducing the entropy loss, we gradually `harden' the high-confidence soft labels, guiding the model to better learn from clean samples. This approach is simple and efficient, requiring no prior knowledge of noise or auxiliary datasets, making it more accessible compared to existing methods. We validate ALR's effectiveness through experiments on benchmark datasets with artificial label noise (CIFAR-10/100) and real-world datasets with inherent noise (ANIMAL-10N, Clothing1M, WebVision). The results show that ALR outperforms state-of-the-art methods.
\end{abstract}

%%Graphical abstract
%\begin{graphicalabstract}
%\includegraphics{grabs}
%\end{graphicalabstract}

%%Research highlights
%\begin{highlights}
%\item Research highlight 1
%\item Research highlight 2
%\end{highlights}

%% Keywords
\begin{keyword}
%% keywords here, in the form: keyword \sep keyword

%% PACS codes here, in the form: \PACS code \sep code

%% MSC codes here, in the form: \MSC code \sep code
%% or \MSC[2008] code \sep code (2000 is the default)
Deep neural network \sep Supervised learning \sep Label noise \sep Label refinement
\end{keyword}

\end{frontmatter}

%% Add \usepackage{lineno} before \begin{document} and uncomment 
%% following line to enable line numbers
%% \linenumbers

%% main text
%%

\section{Introduction}\label{sec1}
The adoption of deep neural networks (DNNs) has significantly advanced the field of classification tasks. However, training high-performance classification models often relies on extensive and accurately annotated datasets, which are challenge and often impractical to obtain in practical applications \citep{li2024rethinking, zhu2024robust, zhou2021learning}. In practice, datasets are often annotated using approaches such as crowdsourcing \citep{yu2018learning} or online searches \citep{liu2011noise}. Although these approaches can collect large amounts of data, they inevitably introduce label noise \citep{zhang2024gradient}. Existing research has shown that DNNs have a strong memorization capability, which allows them to learn noisy labels in data \citep{zhang2017understanding}. This can lead to overfitting to incorrect labels during training, resulting in decreased performance. Thus, designing algorithms robust to label noise is essential for effectively utilizing large-scale datasets, when label quality is low.

Recently, significant attention has been focused on developing and refining methods to address label noise, resulting in notable progress \citep{zhang2017understanding, liu2020early, shi2023self}. The early learning phenomenon observed when training on datasets containing mislabeled instances has been highlighted in numerous studies. Specifically, DNNs initially memorize simpler (clean) samples \citep{arpit2017closer, liu2020early}, while samples with larger loss values are often indicative of mislabeled instances. Based on this insight, several approaches have been proposed, including assigning smaller weights to noise samples \citep{shu2019meta} or applying regularization techniques to minimize the negative effects of label noise \citep{krogh1991simple, srivastava2014dropout, xia2020robust}. Nonetheless, the highly nonlinear nature of typical classification loss functions may still lead these methods to inadvertently fit incorrect labels \citep{nguyen2020self}.

A general approach to addressing this issue is sample selection based on early learning \citep{han2018co, yu2019does}, where noisy labels are identified using loss values or confidence scores, enabling the model to prioritize clean labels for training. Moreover, DivideMix \citep{li2020dividemix} effectively mitigates label noise by integrating sample selection with semi-supervised learning. However, accurately differentiating between clean and noisy samples is critical for maintaining model robustness. These methods are often complex and require meticulous hyperparameter tuning to avoid incorrectly classifying noisy labels as clean, which complicates their deployment in practical applications. Furthermore, some methods rely on unrealistic prerequisites. For instance, Meta-weight requires additional small, clean datasets to assist in training \citep{shu2019meta}, while Forward assumes prior knowledge about the noise distribution \citep{patrini2017making}. Despite these efforts, DNNs still prone to fitting incorrect labels, which remains a significant challenge.

 To effectively reduce the negative impact of label noise, a recent category of label refurbishment approaches has been proposed \citep{lu2023rethinking, szegedy2016rethinking, lu2022selc}, becoming a focal point of research. During training, label refurbishment methods generate soft labels by proportionally mixing hard labels (one-hot labels) with a specific distribution, such as the model’s predicted probability distribution, to serve as fitting targets \citep{lu2023rethinking, gong2024does}. For example, Reed \textit{et al.} proposed Bootstrapping Loss, which updates the target by mixing the original noisy labels with the model’s predicted probability distribution \citep{2015TRAINING}. Consider a classification example with two categories: airplanes and birds. Suppose that we have an image of an airplane that is incorrectly labeled as a bird, with its one-hot label as $[0, 1]$. If the model predicts $[0.5, 0.5]$, the refurbished soft label is calculated using the formula: $[\lambda \times 0 + 0.5 \times (1 - \lambda), \lambda \times 1 + 0.5 \times (1 - \lambda)]$, where $\lambda$ is a hyperparameter with a value in the range $[0, 1)$. Refurbished labels mitigate the impact of label noise, as incorrect labels exert a reduced negative influence on the model after refurbishment and may even be corrected. However, while label refurbishment reduces the model’s tendency to fit noisy labels, it can also lead to the underfitting of clean labels, which are converted similarly from hard labels to soft labels. As illustrated in Figure \ref{fig0} (a) and (b), the t-SNE visualization reveals that clean samples within the same class are not sufficiently clustered, and the separation between different classes is weak. This issue intensifies with a higher noise rate.

\begin{figure}[htbp]
    \centering
    \subfigure[LR-Sym-20\%]{\includegraphics[width=0.49\textwidth]{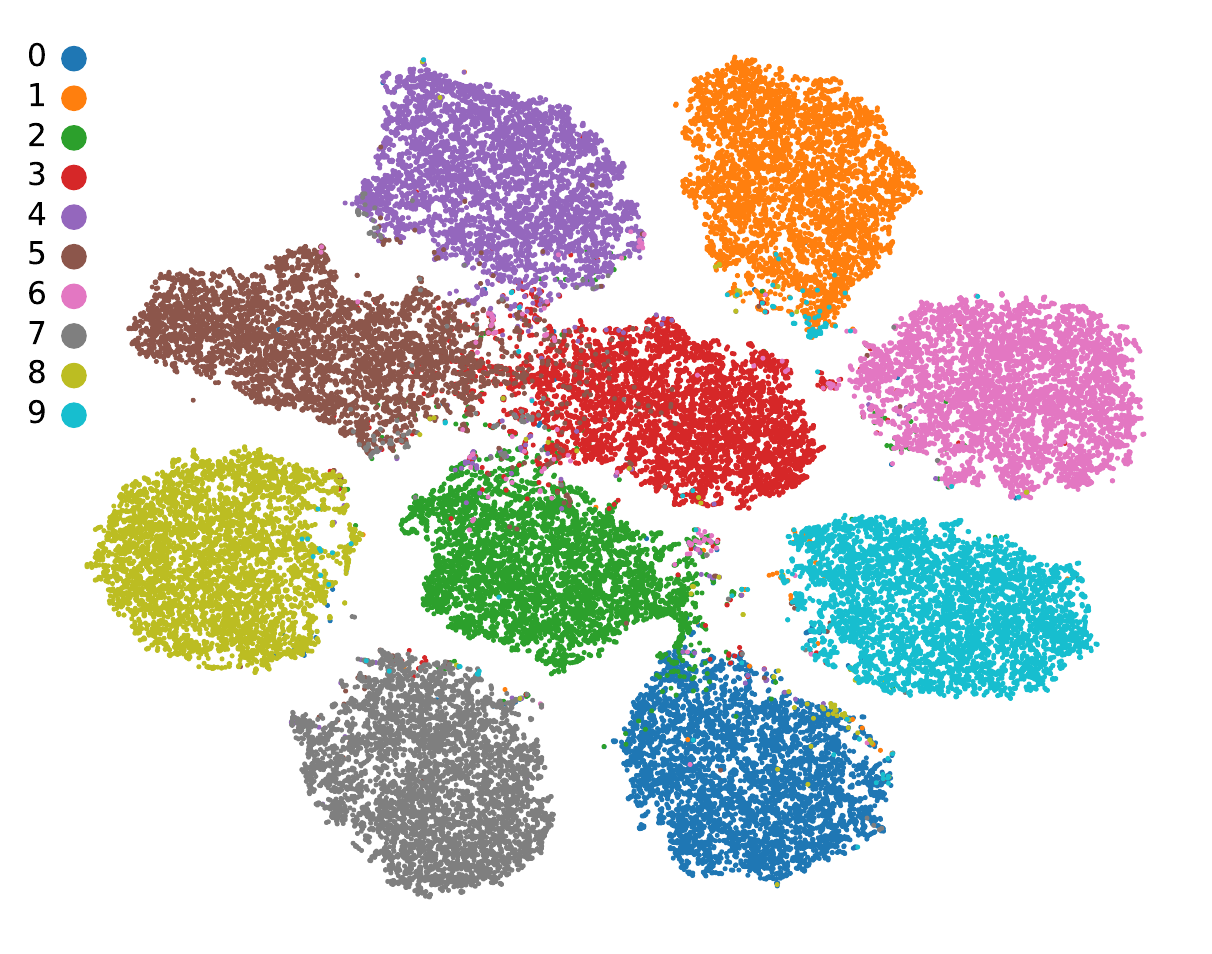}}
    \subfigure[LR-Sym-80\%]{\includegraphics[width=0.49\textwidth]{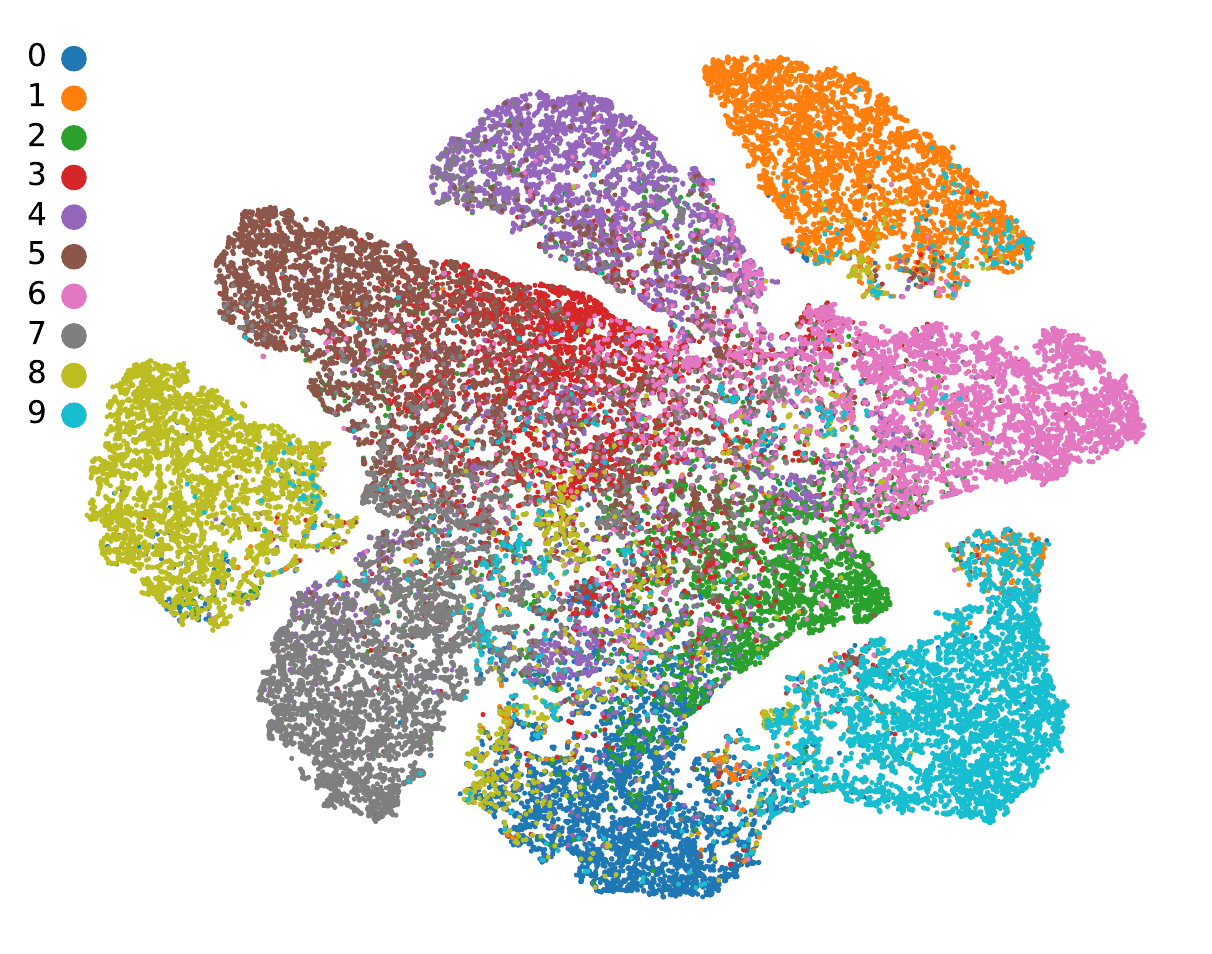}}
    
    \caption{t-SNE visualizations in plots (a) and (b) illustrate the relationships among clean samples in the CIFAR-10 training set after training the model with the pure label refurbishment (LR) method under 20\% and 80\% symmetric label noise.}
    \label{fig0}
\end{figure}

Although sample selection-based methods demonstrate superior performance, accurately distinguishing the side effects of noisy labels (e.g., reliance on complex models and sensitive hyperparameters) poses a significant challenge in practical applications. Although methods like label refurbishment are simple and easy to implement, they may sacrifice specific performance metrics in pursuit of robustness. The primary difficulty with sample selection methods arises from their attempt to simultaneously address two distinct issues through a single operation: distinguishing between clean and noisy labels to thoroughly learn clean labels while avoiding fitting noisy labels. This is like walking a tightrope, where maintaining balance is inherently difficult. However, the limitation of label refurbishment methods is that they focus solely on alleviating a model's tendency toward fitting incorrect labels, often compromising its ability to thoroughly learn clean labels. This raises a key question: \emph{How can we thoroughly learn clean labels while avoiding memorization of noisy labels?}

To tackle this challenge, this paper presents \underline{A}daptive \underline{L}abel \underline{R}efinement (ALR), a simple and highly applicable method that decouples the tasks of thoroughly learning clean labels and avoiding memorization of incorrect labels. The core idea of ALR lies in its implicit handling of clean and noisy labels, focusing on adaptive improvement rather than explicit differentiation. Inspired by SELC \citep{lu2022selc}, ALR employs a temporal integration strategy to generate soft labels. These are created by mixing the original hard labels with the probability distributions derived from the network’s historical predictions, which serve as the fitting target. These soft labels reduce the impact of noisy labels while offering flexibility to handle clean and noisy labels in a unified manner.
To ensure thorough learning of clean labels, we introduce an additional regularization term that addresses the issue of underfitting clean labels. Our method progressively “hardens” high-confidence soft labels during training, enabling the model to focus on increasingly precise and accurate learning objectives, ultimately improving its performance. As the model’s accuracy improves, its predictions become more reliable, allowing it to effectively learn from an increasing number of clean labels. These two processes reinforce each other, leading to steady performance improvements throughout the training iterations.

The main contributions of this study can be outlined as follows:
\begin{itemize}
  \item  We introduce a novel strategy for label refinement that decouples the tasks of thoroughly learning of clean labels and preventing memorizing noisy labels.
  \item We introduce a simple yet effective end-to-end label refinement method, called Adaptive Label Refinement (ALR), to address label noise. ALR requires no prior knowledge of noise distribution or stringent prerequisites, making it easy to implement and widely applicable.
  \item  ALR demonstrates exceptional robustness, achieving state-of-the-art accuracy on multiple artificial and real-world noisy label datasets. Furthermore, it outperforms the performance of traditional cross-entropy loss on clean datasets.
\end{itemize}

\section{Related Work}\label{sec2}
Label noise remains a significant challenge in deep learning for image classification. Numerous approaches have been introduced to address this problem, typically falling into four main categories: robust loss, regularization, sample selection, and label refinement. Below, we provide a brief discussion of these approaches.

\textbf{Robust loss.} Robust loss methods focus on designing loss functions capable of effectively handling label noise. Compared to the Cross-Entropy (CE), the mean absolute error (MAE) has higher robustness to noisy labels but tends to suffer from underfitting \citep{ghosh2017robust}. To address this limitation, subsequent studies have aimed to balance the model’s fitting ability and robustness to enhance performance. For example, Generalized Cross Entropy (GCE) achieves an optimal balance between CE and MAE by adjusting a parameter \citep{zhang2018generalized}. Soft Cross Entropy (SCE) integrates reverse cross-entropy with cross-entropy \citep{wang2019symmetric}. Dynamic Adaptive Loss (DAL) establishes a dynamic balance between the model’s fitting capability and robustness, further enhancing generalization performance \citep{li2023dynamics}. These methods are theoretically well-supported and can be easily integrated with other approaches.

\textbf{Regularization.} Regularization is a powerful strategy to combat label noise by limiting model sensitivity to noisy labels. This is achieved by introducing constraints or adjusting model complexity. Common regularization techniques include Data Augmentation \citep{shorten2019survey}, Weight Decay \citep{krogh1991simple}, and Dropout \citep{srivastava2014dropout}. To specifically address noisy labels, Label Smoothing (LS) improves the model’s generalization by smoothing the original noise labels \citep{szegedy2016rethinking}. Arpit \textit{et al.} demonstrated that regularization can hinder the memorization of noise \citep{arpit2017closer}, showing that models tend to learn clean data before fitting noisy labels. Building on this, ELR introduces a regularization term that utilizes the early learning phenomenon to mitigate noise memorization \citep{liu2020early}. Similarly, CDR proposed a regularization term that deactivates non-critical model parameters during training iterations \citep{xia2020robust}. Sparse regularization encourages the model’s predictions to converge towards a sparse distribution \citep{zhou2021learning}. CTRR further proposes a contrastive regularization function that reduces overfitting to noisy labels by constraining feature representations \citep{yi2022learning}.

\textbf{Sample selection.} This category of approaches focuses on distinguishing clean and noisy samples to prevent models from learning incorrect labels \citep{jiang2018mentornet, gui2021towards}. Meta-Weight-Net additionally uses an online meta-learning method that uses a multilayer perceptron to automatically weight samples \citep{shu2019meta}. However, this approach requires a small clean dataset for auxiliary training \citep{shu2019meta}. Many methods utilize the small loss criterion, which assumes that clean samples generally exhibit lower loss values. For example, Co-Teaching uses two networks, each selecting samples with losses below a specific threshold to train the other network \citep{han2018co}. Co-Teaching+ \citep{yu2019does} refines this idea by relying on the disagreement between two networks for sample selection. However, these methods often require manually set thresholds, making them challenging to apply in practice. To address this, AUM designed the Area Under the Margin statistic, proposing a selection mechanism based on the statistic to identify mislabeled samples \citep{pleiss2020identifying}. 

\textbf{Label refinement.} Label refinement involves mixing the original noisy labels with model predictions to generate updated labels. PENCIL introduced an end-to-end label correction method, which simultaneously updates network parameters and label estimates \citep{yi2019probabilistic}. SELC \citep{lu2022selc} leverages the early learning phenomenon to train a sufficiently good model and subsequently updates labels using temporal ensemble methods. SOP \citep{liu2022robust} models label noise using sparse over-parameterization, implicitly correcting noisy labels. Many label refinement methods incorporate sample selection to first identify clean samples and subsequently correct the noisy labels. For example, M-DYR-H used a two-component beta mixture model to approximate the loss distribution, estimating the probability of label errors and correcting labels based on these probabilities \citep{arazo2019unsupervised}. Similarly, DivideMix \citep{li2020dividemix} employs two networks and applies a dual-component Gaussian mixture model to identify mislabeled samples, which are then treated as unlabeled. It then applies semi-supervised learning \citep{yao2023better} to enhance model robustness.

Unlike previous methods, our ALR method leverages label refinement to prevent overfitting to noisy labels while introducing a regularization term to ensure sufficient learning of clean samples. Specifically, ALR eliminates the need for manual thresholding by dynamically learning thresholds automatically in each iteration. Moreover, our method continuously refines and enhances the learning of clean labels throughout the training process, leading to improved robustness and performance.

\section{Preliminaries \& Problem Setting}\label{sec3}
This section presents the preliminaries of supervised classification with label noise and describes the problem setting.

\subsection{Supervised Classification under Label Noise}\label{subsec31}  
This work addresses a supervised classification task with $K$ classes, where the dataset is corrupted with label noise. The corrupted training set is denoted as $\tilde{\mathcal{D}} = \{(\mathbf{x}_i, \tilde{\mathbf{y}}_i)\}_{i=1}^n$, where $\mathbf{x}_i \in \mathbb{R}^d$ is the $i^{\text{th}}$ input, $\tilde{\mathbf{y}}_i \in (0, 1)^K$ denotes the one-hot encoded vector for the $i^{\text{th}}$ input's (potentially corrupted) class label. Note that $\tilde{\mathbf{y}}_i$ has a certain probability of being incorrectly labeled. 

A DNNs model $\mathcal{N}_{\theta}$, characterized by parameters $\theta$, maps $\mathbf{x}_i$ to $K$-dimensional logits $\mathbf{z}_i=\mathcal{N}_{\theta}(\mathbf{x}_i)$, where $\mathbf{z}_i \in \mathbb{R}^K$. These logits are transformed into predicted conditional probabilities for each class using the softmax function $\mathcal{S}$, resulting in $\mathbf{p}_i = \mathcal{S}(\mathbf{z}_i)$, where $\mathbf{p}_i \in [0,1]^K$. 
We evaluate the empirical risk associated with the model predictions by the cross-entropy loss, which is given by:
\begin{equation}
    \mathcal{L}_{ce} = -\frac{1}{n} \sum_{i=1}^{n} \sum_{k=1}^{K} \tilde{\mathbf{y}}_{i[k]} \log \mathbf{p}_{i[k]},
    \label{eq:ce}  
\end{equation}
where $\mathbf{y}_{i[k]}$ denotes the value corresponding to class $k$ in the label vector for the $\mathbf{x}_i$, and $\mathbf{p}_{i[k]}$ indicates the probability of class $k$ in the model's predicted probability distribution. The gradient of the $\mathcal{L}_{ce}$ w.r.t. the logits $\mathbf{z}_i$ can be expressed as:

\begin{equation}
    \frac{\partial \mathcal{L}_{\text{ce}}}{\partial \mathbf{z}_i} = \mathbf{p}_i - \tilde{\mathbf{y}}_i.
    \label{eq:g_ce}
\end{equation}

When optimizing the model based on the $\mathcal{L}_{ce}$ in Eq.(\ref{eq:ce}),  we noticed that initially, the model tends to fit clean samples before gradually fitting the mislabeled ones. Figures \ref{fig_t&t} (a) and (b) illustrate this phenomenon. During the early phase of training, the model’s performance improves; however, it gradually declines during the later phases as noisy labels dominate the learning process. Several prior works have also reported that DNNs exhibit this behavior when learning with noisy labels, a phenomenon referred to as early learning \citep{zhang2017understanding,liu2020early}. ELR \citep{liu2020early} provides further insights into why the model eventually overfits noisy labels by analyzing the gradients of the cross-entropy loss w.r.t. the logits.

\begin{figure}[htbp]
    \centering
    \subfigure[CIFAR-10 Sym-60\%]{\includegraphics[width=0.49\textwidth]{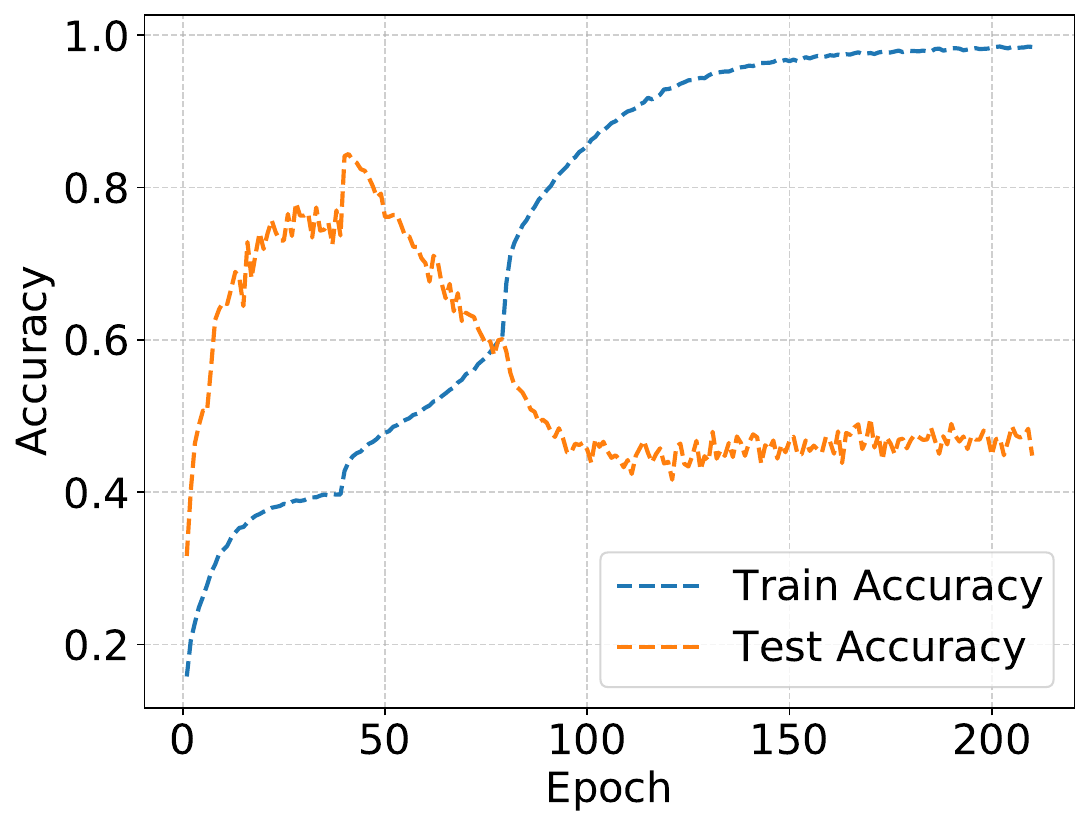}}
    \subfigure[CIFAR-100 Sym-60\%]{\includegraphics[width=0.49\textwidth]{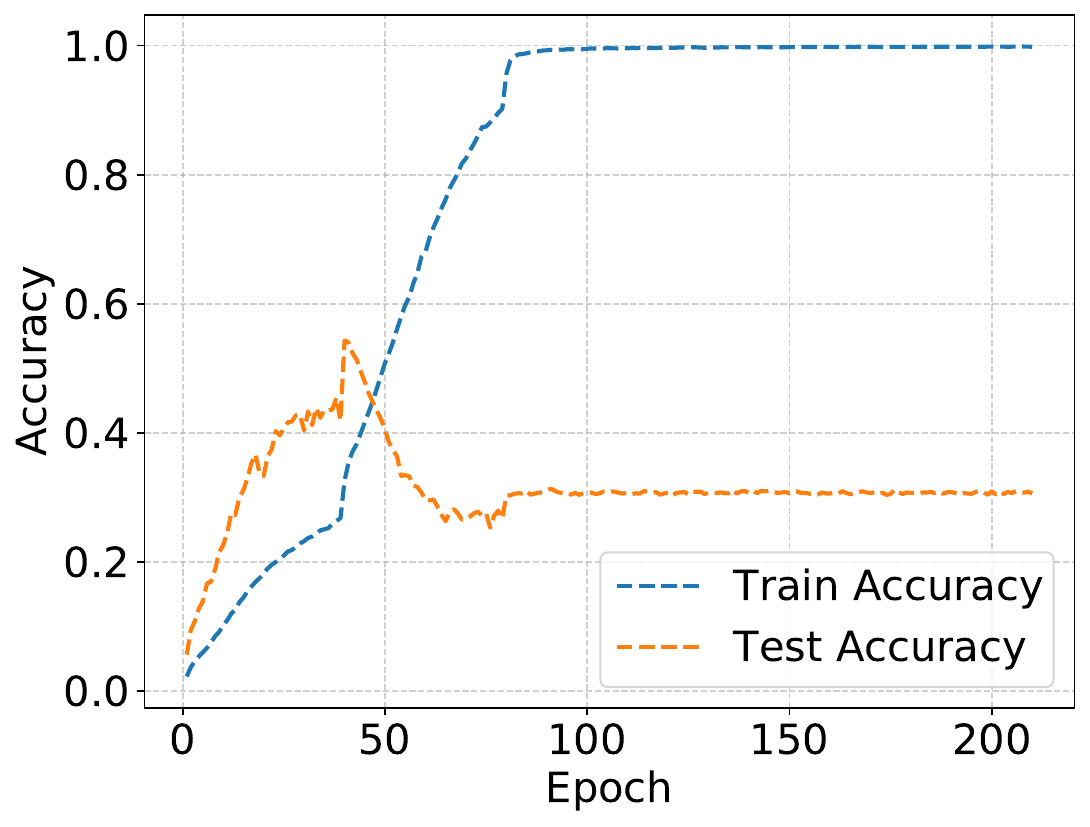}}
    
    \caption{Plots (a) and (b) illustrate the training and testing accuracies of DNNs employing cross-entropy loss on the CIFAR-10/100 datasets, each with 60\% symmetric label noise.}
    \label{fig_t&t}
\end{figure}

From Eq.(\ref{eq:g_ce}), it can be observed that the primary factors influencing the gradient are the model's predictions and the corresponding target values. For a given $\mathbf{x}_i$, let $c$ indicate the labeled class. During the early training phase, both clean and noisy samples exhibit significant gradients $\mathbf{p}_{i[c]} - \tilde{\mathbf{y}}_{i[c]}$, with the gradient differences between individual samples being relatively small. In this phase, the model focuses more on learning from clean samples, as they contribute more effectively to the learning process without significant interference from noisy samples.

However, as the model's performance improves, the gradient $\mathbf{p}_{i[c]} - \tilde{\mathbf{y}}_{i[c]}$ of clean samples approaches 0, while the gradient $\mathbf{p}_{i[c]} - \tilde{\mathbf{y}}_{i[c]}$ of noisy samples approaches $-1$. This shift causes the model to increasingly focus on noisy labels during the later training stages. As a result, noisy samples begin to dominate the learning process, resulting in misled model updates. Ultimately, the model memorizes the noisy labels entirely, further compromising its generalization capability.

\subsection{Problem Setting} \label{subseb32}
We investigate supervised classification under label noise, where a corrupted training dataset $\tilde{\mathcal{D}} = \{(\mathbf{x}_i, \tilde{\mathbf{y}}_i)\}_{i=1}^n$ consists of inputs $\mathbf{x}_i \in \mathbb{R}^d$ and noisy labels $\tilde{\mathbf{y}}_i \in (0, 1)^K$ for $K$ classes. The noisy labels $\tilde{\mathbf{y}}_i$ deviate from the true labels $\mathbf{y}_i$ with some probability, introducing errors that challenge the learning process.

We identify three key challenges in training DNNs with noisy data:
\begin{itemize}
\item DNNs initially learn clean samples effectively but overfit noisy labels in later stages, leading to performance degradation.
\item Noisy samples generate large gradients, which dominate the training process and skew the model.
\item Approaches that use manual thresholds to separate clean and noisy labels are error-prone and depend heavily on the dataset.
\end{itemize}

To overcome these challenges, we present the Adaptive Label Refinement (ALR) method that dynamically mitigates the impact of noisy labels while effectively utilizing clean samples, without requiring prior knowledge of noise characteristics.

\section{Methodology}\label{sec4}
This section introduces the Adaptive Label Refinement (ALR) method.  Section \ref{subsec41} introduces the overall framework of the ALR. We present the adaptive label refinement strategy in Section \ref{subsec42}. Finally, \ref{subsec43} provides a theoretical analysis.

\begin{figure}[htb]
    \centering
    \includegraphics[width=0.7\textwidth]{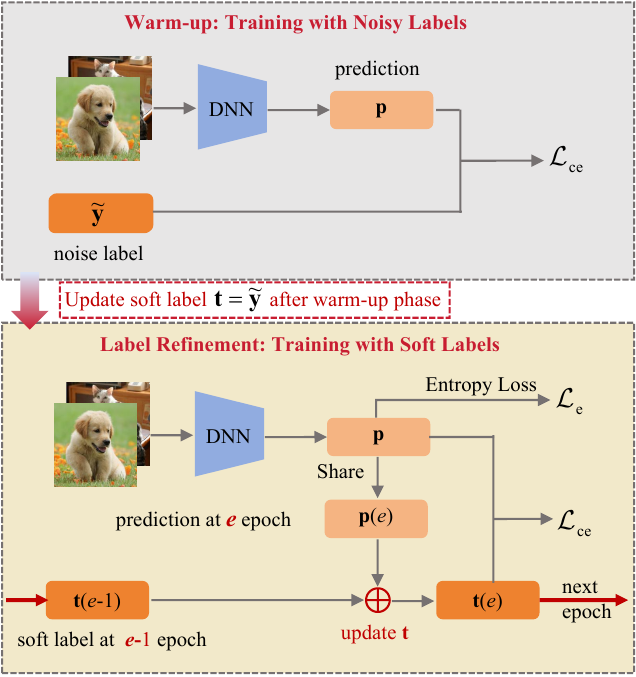}
    \caption{The architecture of the ALR training framework. In the warm-up phase, the model is trained using cross-entropy loss $\mathcal{L}_{ce}$ with the original noisy labels ($\mathbf{y}$). In the label refinement phase, the noisy labels are replaced with soft labels, and an entropy loss term $\mathcal{L}_{\text{e}}$ is incorporated to assist in training the model. At each epoch, the soft label $\mathbf{t}(e)$ is updated by blending the soft label from the previous epoch $\mathbf{t}(e-1)$ and the current prediction $\mathbf{p}(e)$, using a weighted sum.}
    \label{fig_alr}
\end{figure}

\subsection{Overall Framework of ALR}\label{subsec41}
Figure \ref{fig_alr} presents the key components and overall process of the ALR. As illustrated in the figure, the training consists of two phases in the ALR framework: the warm-up phase and the label refinement phase. As noted in Section \ref{subsec31}, an early learning phenomenon occurs when training on datasets with mislabeled samples. Previous work \citep{arpit2017closer} has demonstrated that the model's predictions for clean and mislabeled samples are relatively accurate in the early training phase. Leveraging this property, during the initial m training epochs, we use only the original labels and optimize with the standard cross-entropy loss. This stage, referred to as the warm-up phase, enables the model to generate relatively reliable predictions, which form the foundation for the subsequent adaptive label refinement.

In the label refinement phase, we progressively transform the original hard labels (one-hot labels) into more accurate soft labels during training, following the strategy outlined in Section \ref{subsec42}. The use of soft labels instead of hard labels mitigates the adverse effects of incorrect labels, allowing for iterative improvement in label quality.

Simultaneously, we introduce a minimum entropy regularization term during training iterations to refine the soft labels of high-confidence samples, gradually bringing them closer to one-hot encoded vectors. This approach facilitates adaptive learning of clean labels while reducing the risk of memorizing noisy labels, ultimately enhancing model performance.

\subsection{Adaptive Label Refinement}\label{subsec42}
In order to tackle the negative impact of label noise, we decouple the tasks of thoroughly learning clean labels and avoiding fitting incorrect labels. As discussed in Section \ref{subsec31}, the gradient in Eq.(\ref{eq:g_ce}) plays a key role in guiding the model's learning of both clean and noisy samples. With the progression of training, the model tends to overfit noisy labels because the gradient $\mathbf{p}_{i[u]} - \tilde{\mathbf{y}}_{i[u]}$ of noisy samples remains large, dominating the training process. To prevent noisy samples from dominating the training process, maintaining the gradient of noisy samples smaller than that of clean samples throughout training is the key challenge. 

We propose an adaptive label refinement strategy to tackle this key challenge. To reduce the model's sensitivity to incorrect labels, we progressively transform the original one-hot labels into probability distributions based on the model's predictions. By gradually shifting the learning objective from hard labels to predicted probability distributions, the magnitude of the labeled class $\tilde{\mathbf{y}}_{i[u]}$ is reduced, thereby decreasing the gradient $\mathbf{p}_{i[u]} - \tilde{\mathbf{y}}_{i[u]}$. For noisy samples, this reduction in the gradient mitigates the model’s sensitivity to incorrect labels.

Inspired by SELC \cite{lu2022selc}, We adopt the temporal ensembling to generate soft labels as the model's new learning targets by blending original one-hot labels with its historical predictions. The target $\mathbf{t}(e)$ at the epoch $e$ during the label improvement phase is defined as follows:

\begin{equation}
    t(e) = 
        \begin{cases} 
        \tilde{\mathbf{y}} & \text{if } e <= m \\ 
        \alpha \mathbf{t}(e-1) + (1 - \alpha)\mathbf{p}(e), & \text{if } e > m
        \end{cases}
        \label{ep:label_t}
\end{equation}
where $\tilde{\mathbf{y}}_i$ is the original noisy label of $i^{th}$ sample, and $\mathbf{p}_{i}(e)$ is the model's prediction for the $i^{th}$ sample at epoch $e$. The warming phase ends at epoch $m$, where $e \leq m$ indicates the warming phase and $e > m$ indicates the adaptive label refinement phase. $\alpha$ is the momentum, with $0 \leq \alpha < 1$.

Substituting Eq. (\ref{ep:label_t}) into Eq. (\ref{ep:label_t}) gives the loss function as:

\begin{equation}
\mathcal{L}_{ce} = -\frac{1}{n} \sum_{i=1}^{n} \sum_{k=1}^{K} {\mathbf{t}_{i[k]}}(e) \log {\mathbf{p}_{i[k]}}(e), 
\label{ep:noise_ce}
\end{equation}
where $\mathbf{t}_{i[k]}(e)$ denotes the value of class $k$ in the target $t_i$ for the $i^{th}$ sample at epoch $e$, and $\mathbf{p}_{i[k]}(e)$ is the corresponding value in the model's prediction $\mathbf{p}_i$. When $e \leq m$, the learning targets are the original hard labels. For $e > m$,  the model's learning targets are gradually updated from hard to soft labels.

Although soft labels mitigate the model's fitting to incorrect labels, they also hinder learning from clean labels, reducing the utilization of clean samples. Traditional methods rely on sample selection strategies to distinguish clean samples from noisy ones, aiming to maximize the use of clean data. However, such methods often require complex models and are highly dataset-dependent. To overcome these limitations, we implicitly filter clean samples by introducing an entropy loss regularization term, defined as follows:
\begin{equation}
\mathcal{L}_{\text{e}} = -\sum_{i=1}^{n} \sum_{k=1}^{K} {\mathbf{p}_{i[k]}}(e) \log {\mathbf{p}_{i[k]}}(e).
\end{equation}

The overall loss function is defined as follows:

\begin{equation}
    \mathcal{L}_{ALR} = \mathcal{L}_{ce} + \lambda \mathcal{L}_{\text{e}}, 
    \label{ep:alr}
\end{equation}
where $\lambda$ is a hyperparameter that governs the relative importance of the entropy loss term in the overall loss function. 

During iterative training, the regularization term causes the soft labels of high-confidence samples to harden gradually, pushing their probability distribution closer to one-hot encoding. By leveraging clean samples more comprehensively, the model's predictive accuracy on noisy samples is further improved as well.

The regularization weight, $\lambda$, should be set to a relatively small value, such as $0.2$. This ensures that soft labels for high model confidence samples are hardened effectively, allowing for accurate and robust model updates. Simultaneously, it prevents the overcorrection of potential errors in the soft labels for low-confidence samples. By balancing these effects, the model learns effectively from clean samples, enhancing performance while avoiding overfitting to noisy labels. The Algorithm \ref{algo1} shows the detailed pseudocode for ALR.

\begin{algorithm}
\caption{Pseudocode for ALR.}\label{algo1}
\textbf{Input:} DNNs $\mathcal{N}_{\theta}$, noisy training data $\tilde{\mathcal{D}} = \{(\mathbf{x}_i, \tilde{\mathbf{y}}_i)\}_{i=1}^n$, total epoch $E$, warm-up phase epoch $m$, hyperparameter $\alpha$ and $\lambda$\\
\textbf{Output:} Optimized DNNs $\mathcal{N}_{\theta^*}$
\begin{algorithmic}[1]
 \State Let $\mathbf{t}(0) = \tilde{\mathbf{y}}$. \Comment{initialize soft labels $\mathbf{t}$}
\While{epoch $e < E$}
    \If{epoch $e < m$} \Comment{warm-up phase}
        \For{$b=1$ to $B$} \Comment{$b \in (1,...,B)$, $B$ is number of mini-batch}
            \For{$i$ in mini-batch $b$} 
                \State $\mathbf{t}_i(e) = \tilde{\mathbf{y}_i}$
                \State $\mathbf{p}_i(e) = \mathcal{S}(\mathcal{N}_{\theta}(\mathbf{x}_i))$ \Comment{prediction of $\mathbf{x}_i$}
                \State $\mathcal{L}_{ce} = -\frac{1}{n} \sum_{i=1}^{n} {\mathbf{t}_{i}}(e) \log {\mathbf{p}_{i}(e)}$
                \State Update $\mathcal{N}_{\theta}$ using SGD. \Comment{update DNNs parameters}
            \EndFor
        \EndFor
    \Else \Comment{label refinement phase}
        \For{$b=1$ to $B$} \Comment{$b \in (1,...,B)$, $B$ is number of mini-batch}
            \For{$i$ in mini-batch $b$} 
                \State $\mathbf{p}_i(e) = \mathcal{S}(\mathcal{N}_{\theta}(\mathbf{x}_i))$ \Comment{prediction of $\mathbf{x}_i$}
                \State $\mathbf{t}_i(e) = \alpha \mathbf{t}_i(e-1) + (1 - \alpha)\mathbf{p}_i(e)$ \Comment{update soft label $\mathbf{t}_i(e)$} 
                \State $\mathcal{L}_{ALR} = -\frac{1}{n} \sum_{i=1}^{n} {\mathbf{t}_{i}}(e) \log {\mathbf{p}_{i}(e)}$ \Comment{cross-entropy loss}
                                     \State\hspace{4em}$-\frac{\lambda}{n} \sum_{i=1}^{n} \sum_{k=1}^{K} {\mathbf{p}_{i[k]}}(e) \log {\mathbf{p}_{i[k]}}(e)$ \Comment{entropy loss}
                
                \State Update $\mathcal{N}_{\theta}$ using SGD. \Comment{update DNNs parameters}
            \EndFor
        \EndFor
    \EndIf
\EndWhile
\end{algorithmic}
\end{algorithm}

\subsection{Theoretical Analysis of ALR}\label{subsec43}

In this section, we analyze how ALR effectively promotes learning clean samples and avoiding overfitting noisy labels from a gradient perspective. During the label refinement phase, the learning target is the soft label $\mathbf{t}$, where the probability $\mathbf{t}_{[k]}$ of class $k$ satisfies $\mathbf{t}_{[k]} \in [0,1]$, and the sum of all class probabilities equals 1, i.e., $\sum_{k=1}^K  \mathbf{t}_{[k]} = 1$. Similarly, The model's prediction output is $\mathbf{p}$, with the predicted probability $\mathbf{p}_{[k]}$ of class $k$ also satisfies $\mathbf{p}_{[k]} \in [0,1]$, and $\sum_{k=1}^K \mathbf{p}_{[k]} = 1$. For simplicity, assume that the class with the highest predicted probability for a sample is $u$. The following theorem is presented to support the analysis.

\begin{theorem}
Given the adaptive label refinement loss $\ell_{ALR}$ in Eq. (\ref{ep:alr}), the sample-wise loss can be rewritten as: $\ell_{ALR} = -\sum_{k=1}^{K} {\mathbf{t}_{[k]}} \log {\mathbf{p}_{[k]}} - \sum_{k=1}^{K} {\mathbf{p}_{[k]}} \log {\mathbf{p}_{[k]}}$. For convenience, let $H \left( \mathbf{p} \right) = -\sum_{k=1}^{K} {\mathbf{p}_{[k]}} \log {\mathbf{p}_{[k]}}$ representing the entropy of the prediction $\mathbf{p}$. The gradient of $\ell_{ALR}$ with respect to the logits $\mathbf{z}_{[u]}$ is as follow: 
\begin{equation}
\frac{\partial \ell_{ALR}}{\partial \mathbf{z}_{[u]}} =  \mathbf{p}_{[u]} \left( 1 - \lambda \left( \log{\mathbf{p}_{[u]}} + H \left( \mathbf{p} \right) \right) \right) - \mathbf{t}_{[u]}, 
\label{eq:th_alr}
\end{equation}
where $\mathbf{z}_{[u]}$ is the $c^{th}$ logit value, $\lambda \in (0,1)$. Additionally, there exists $\varepsilon \in (0,1)$ such that
\begin{equation}
    \begin{cases} 
    \frac{\partial \ell_{ALR}}{\partial \mathbf{z}_{[u]}} < \mathbf{p}_{[u]} - \mathbf{t}_{[u]}, & \text{if } \mathbf{p}_{[u]} > \varepsilon, \\ 
    \frac{\partial \ell_{ALR}}{\partial \mathbf{z}_{[u]}} \geq \mathbf{p}_{[u]} -\mathbf{t}_{[u]}, & \text{if } \mathbf{p}_{[u]} \leq \varepsilon, 
    \end{cases} \label{eq:th_logit}
\end{equation}
where $\mathbf{p}_{[u]} > \varepsilon$ denotes a high-confidence prediction, while $\mathbf{p}_{[u]} \leq \varepsilon$ signifies a low-confidence prediction.  \label{th_alr}
\end{theorem}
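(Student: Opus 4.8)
The plan is to obtain the gradient by differentiating each term of $\ell_{ALR} = -\sum_{k} \mathbf{t}_{[k]}\log\mathbf{p}_{[k]} + \lambda H(\mathbf{p})$ (the $\lambda$-weighted form consistent with Eq.~(\ref{ep:alr})) through the softmax, and then to read off the dichotomy in Eq.~(\ref{eq:th_logit}) by comparing against the plain soft-label cross-entropy gradient $\mathbf{p}_{[u]} - \mathbf{t}_{[u]}$. The single ingredient used throughout is the softmax Jacobian $\partial \mathbf{p}_{[k]}/\partial \mathbf{z}_{[u]} = \mathbf{p}_{[k]}(\delta_{ku} - \mathbf{p}_{[u]})$, where $\delta_{ku}$ is the Kronecker delta.

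First I would differentiate the cross-entropy term: substituting the Jacobian into $-\sum_k \mathbf{t}_{[k]}\mathbf{p}_{[k]}^{-1}\,\partial\mathbf{p}_{[k]}/\partial\mathbf{z}_{[u]}$ cancels the $\mathbf{p}_{[k]}$ factors, and the normalization $\sum_k \mathbf{t}_{[k]} = 1$ collapses the sum to the familiar $\mathbf{p}_{[u]} - \mathbf{t}_{[u]}$. Next I would differentiate the entropy term $H(\mathbf{p}) = -\sum_k \mathbf{p}_{[k]}\log\mathbf{p}_{[k]}$; here the product rule attaches a factor $(\log\mathbf{p}_{[k]} + 1)$ to each $\partial\mathbf{p}_{[k]}/\partial\mathbf{z}_{[u]}$. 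Splitting the Jacobian into its diagonal ($\delta_{ku}$) and rank-one ($-\mathbf{p}_{[u]}$) parts and using both $\sum_k \mathbf{p}_{[k]} = 1$ and $-\sum_k \mathbf{p}_{[k]}\log\mathbf{p}_{[k]} = H(\mathbf{p})$, the two $+1$ contributions cancel and what survives is $\partial H(\mathbf{p})/\partial\mathbf{z}_{[u]} = -\mathbf{p}_{[u]}(\log\mathbf{p}_{[u]} + H(\mathbf{p}))$. Adding $\lambda$ times this to the cross-entropy gradient and factoring out $\mathbf{p}_{[u]}$ reproduces Eq.~(\ref{eq:th_alr}) exactly. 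This step is routine but requires careful bookkeeping of the $+1$ terms.

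For the dichotomy I would subtract the plain soft-label gradient, obtaining the clean identity $\partial\ell_{ALR}/\partial\mathbf{z}_{[u]} - (\mathbf{p}_{[u]} - \mathbf{t}_{[u]}) = -\lambda\mathbf{p}_{[u]}(\log\mathbf{p}_{[u]} + H(\mathbf{p}))$. Since $\lambda\mathbf{p}_{[u]} > 0$, the sign of this correction is governed entirely by $g(\mathbf{p}) := \log\mathbf{p}_{[u]} + H(\mathbf{p})$, so I would set the threshold $\varepsilon := e^{-H(\mathbf{p})}$. With this choice $g(\mathbf{p}) > 0 \iff \mathbf{p}_{[u]} > \varepsilon$ yields the strict inequality of the first case, and $g(\mathbf{p}) \le 0 \iff \mathbf{p}_{[u]} \le \varepsilon$ yields the second, which is precisely Eq.~(\ref{eq:th_logit}). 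That $\varepsilon \in (0,1)$ follows from $0 < H(\mathbf{p}) \le \log K$ for any non-degenerate prediction, giving $\varepsilon \in [1/K, 1)$.

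The main obstacle is conceptual rather than computational: because $H(\mathbf{p})$ depends on the whole probability vector and not on $\mathbf{p}_{[u]}$ alone, no universal constant can serve as the threshold, so $\varepsilon$ is intrinsically prediction-dependent. I would make this precise by rewriting $g(\mathbf{p}) = \sum_k \mathbf{p}_{[k]}\log(\mathbf{p}_{[u]}/\mathbf{p}_{[k]})$ and noting that, under the standing assumption that $u$ is the arg-max class, every summand is nonnegative, whence $g(\mathbf{p}) \ge 0$ and equivalently $\mathbf{p}_{[u]} = \max_k \mathbf{p}_{[k]} \ge e^{-H(\mathbf{p})} = \varepsilon$. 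Thus the high-confidence branch is the operative one, with the boundary $\mathbf{p}_{[u]} = \varepsilon$ attained exactly at distributions that are uniform over their support. Flagging this both justifies reading "$\mathbf{p}_{[u]} > \varepsilon$" as high confidence and clarifies the qualitative message the theorem supports: the entropy regularizer systematically shrinks the arg-max-class gradient, hardening confident soft labels.
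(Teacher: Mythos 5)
Your gradient computation is correct and coincides with the paper's: both differentiate through the softmax Jacobian, obtain $\partial H(\mathbf{p})/\partial \mathbf{z}_{[u]} = -\mathbf{p}_{[u]}\left(\log \mathbf{p}_{[u]} + H(\mathbf{p})\right)$ after the $+1$ terms cancel, and arrive at Eq.~(\ref{eq:th_alr}). Where you genuinely diverge is the dichotomy. The paper defines $f(\mathbf{p}_{[u]}) = \log \mathbf{p}_{[u]} + H(\mathbf{p})$, treats it as a univariate function of $\mathbf{p}_{[u]}$ on $(0,1)$, and establishes a root by the Intermediate Value Theorem together with a monotonicity argument for uniqueness. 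You instead exhibit the threshold in closed form, $\varepsilon = e^{-H(\mathbf{p})}$, which makes existence and the sign dichotomy immediate and sidesteps a real weakness of the paper's argument: since $H(\mathbf{p})$ depends on the entire probability vector, $f$ is not actually a function of $\mathbf{p}_{[u]}$ alone, and the paper's derivative bound $f'(\mathbf{p}_{[u]}) = \tfrac{1}{\mathbf{p}_{[u]}} - (\log \mathbf{p}_{[k]} + 1) > 0$ is both notationally garbled and implicitly holds the other coordinates fixed, which is inconsistent with the simplex constraint. Your explicit $\varepsilon$ buys rigor and also transparency about what the threshold is. Your final observation --- that under the standing assumption that $u$ is the arg-max class one has $\log \mathbf{p}_{[u]} + H(\mathbf{p}) = \sum_k \mathbf{p}_{[k]}\log(\mathbf{p}_{[u]}/\mathbf{p}_{[k]}) \geq 0$, so the ``low-confidence'' branch degenerates to the boundary case of distributions uniform over their support --- is not in the paper and is worth flagging: it sharpens the theorem's interpretation (the regularizer always weakly shrinks the arg-max gradient) but also shows the stated two-case split is less of a genuine dichotomy than the theorem's phrasing suggests.
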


\begin{proof}
In the label refinement phase, the sample-wise loss $\ell_{ALR}$ as follow:

\begin{equation}
\ell_{ALR} = -\sum_{k=1}^{K} {\mathbf{t}_{[k]}} \log {\mathbf{p}_{[k]}} - \sum_{k=1}^{K} {\mathbf{p}_{[k]}} \log {\mathbf{p}_{[k]}}.
\end{equation}

The gradient of the loss $\ell_{ALR}$ with respect to the logits $\mathbf{z}$  is calculated as follows:

\begin{equation}
\frac{\partial \ell_{ALR}}{\partial \mathbf{z}_{[u]}} = - \sum_{k=1}^{K} \frac{\mathbf{t}_{[k]}}{\mathbf{p}_{[k]}} \frac{\partial \mathbf{p}_{[k]}}{\partial \mathbf{z}_{[u]}} - \lambda \sum_{k=1}^{K} (1 + \log{\mathbf{p}_{[k]}}) \frac{\partial \mathbf{p}_{[k]}}{\partial \mathbf{z}_{[u]}}, 
\end{equation}
where $\mathbf{p} = softmax(\mathbf{z}) = \frac{e^{\mathbf{z}_{[k]}}}{\sum_{k=1}^{K} e^{\mathbf{z}_{[k]}}}$, we have
\begin{equation}
    \frac{\partial \mathbf{t}_{[k]}}{\partial \mathbf{z}_{[u]}} = \frac{\partial}{\partial \mathbf{z}_{[u]}} \left( \frac{e^{\mathbf{z}_{[k]}}}{\sum_{k=1}^K e^{\mathbf{z}_{[u]}}} \right) 
= \frac{\frac{\partial e^{\mathbf{z}_{[k]}}}{\partial \mathbf{z}_{[k]}} \left( \sum_{k=1}^K e^{\mathbf{z}_{[u]}} \right) - e^{\mathbf{z}_{[k]}} \frac{\partial \left( \sum_{k=1}^K e^{\mathbf{z}_{[u]}} \right)}{\partial {\mathbf{z}_{[u]}}}}{\left( \sum_{k=1}^K e^{\mathbf{z}_{[u]}} \right)^2}.
\end{equation}

For $k=u$:
\begin{align}
    \frac{\partial \mathbf{p}_{[k]}}{\partial \mathbf{z}_{[u]}} 
& = \frac{e^{\mathbf{z}_{[k]}} \left( \sum_{k=1}^K e^{\mathbf{z}_{[k]}} \right) - e^{\mathbf{z}_{[k]}} \cdot e^{\mathbf{z}_{[k]}}}{\left( \sum_{k=1}^K e^{\mathbf{z}_{[k]}} \right)^2} \nonumber  \\&
= \frac{e^{\mathbf{z}_{[k]}}}{\sum_{k=1}^K e^{\mathbf{z}_{[k]}}} - \left( \frac{e^{\mathbf{z}_{[k]}}}{\sum_{k=1}^K e^{\mathbf{z}_{[k]}}} \right)^2
= \mathbf{p}_{[k]} - \mathbf{p}_{[k]}^2.
\end{align}

For $k \neq u$:
\begin{equation}
    \frac{\partial \mathbf{p}_{[k]}}{\partial \mathbf{z}_{[u]}} = \frac{0 \cdot \left( \sum_{u=1}^K e^{\mathbf{z}_{[u]}} \right) - e^{\mathbf{z}_{[k]}} \cdot e^{\mathbf{z}_{[u]}}}{\left( \sum_{k=1}^K e^{\mathbf{z}_{[u]}} \right)^2} 
= - \frac{e^{\mathbf{z}_{[u]}} e^{\mathbf{z}_{[u]}}}{\left( \sum_{k=1}^K e^{\mathbf{z}_{[u]}} \right)^2} 
= - \mathbf{p}_{[k]} \mathbf{p}_{[u]}.
\end{equation}

By combining Eq. (3) and (4), we can derive the following:
\begin{align}
\frac{\partial \ell_{ALR}}{\partial \mathbf{z}_{[u]}} 
&= - \sum_{k=1}^{K} \frac{\mathbf{t}_{[k]}}{\mathbf{p}_{[k]}} \frac{\partial \mathbf{p}_{[k]}}{\partial \mathbf{z}_{[u]}} - \lambda \sum_{k=1}^{K} \left(1 + \log{\mathbf{p}_{[k]}}\right) \frac{\partial \mathbf{p}_{[k]}}{\partial \mathbf{z}_{[u]}} \nonumber\\
&= - \left( \frac{\mathbf{t}_{[u]}}{\mathbf{p}_{[u]}} + \lambda \left( \log{\mathbf{p}_{[u]}} + 1 \right) \right) \frac{\partial \mathbf{p}_{[u]}}{\partial \mathbf{z}_{[u]}} - \sum_{k \neq c} \left( \frac{\mathbf{t}_{[k]}}{\mathbf{p}_{[k]}} + \lambda \left( \log{\mathbf{p}_{[k]}} + 1 \right) \right) \frac{\partial \mathbf{p}_{[k]}}{\partial \mathbf{z}_{[k]}} \nonumber \\
&= - \left( \frac{\mathbf{t}_{[u]}}{\mathbf{p}_{[u]}} + \lambda \left( \log{\mathbf{p}_{[u]}} + 1 \right) \right) \left( \mathbf{p}_{[u]}-\mathbf{p}_{[u]}^2 \right)  \nonumber \\  &\hspace{0.5cm} - \sum_{k \neq c} \left( \frac{\mathbf{t}_{[k]}}{\mathbf{p}_{[k]}} + \lambda \left( \log{\mathbf{p}_{[k]}} + 1 \right) \right) \left( -\mathbf{p}_{[k]}\mathbf{p}_{[u]} \right) \nonumber \\
&= \mathbf{p}_{[u]} - \mathbf{t}_{[u]} - \lambda \mathbf{p}_{[u]} \left( \log{\mathbf{p}_{[u]}} + H(\mathbf{p}) \right) \nonumber \\
&= \mathbf{p}_{[u]} \left( 1 - \lambda \left( \log{\mathbf{p}_{[u]}} + H \left( \mathbf{p} \right) \right) \right) - \mathbf{t}_{[u]},  \label{eq_z_alr}
\end{align}
where $H \left( \mathbf{p} \right) =  -\sum_{k=1}^{K} {\mathbf{p}_{[k]}} \log {\mathbf{p}_{[k]}}$. \\

The function $ f(\mathbf{p}_{[u]}) $ is defined as:

\begin{equation}
f(\mathbf{p}_{[u]}) = \log \mathbf{p}_{[u]} + H(\mathbf{p}),
\end{equation}

The properties of \(f(\mathbf{p}_{[u]})\) are analyzed as follows:

\textbf{Continuity and Existence of a root.}
The function $f(\mathbf{p}_{[u]})$ is continuous on $(0, 1)$, and its behaviour at the boundaries of the interval is as follows:
   \begin{itemize}
       \item As $\mathbf{p}_{[u]} \to 0^+$, $\log \mathbf{p}_{[u]} \to -\infty$, so $f(\mathbf{p}_{[u]}) \to -\infty$;
       \item As $\mathbf{p}_{[u]} \to 1^-$, $\log \mathbf{p}_{[u]} \to 0$, and since $H(\mathbf{p}) > 0$, $f(\mathbf{p}_{[u]}) \to H(\mathbf{p}) > 0$.
   \end{itemize}

By the \textit{Intermediate Value Theorem}, since $ f(\mathbf{p}_{[u]}) $ is continuous on $ (0, 1) $ and tends to $ -\infty $ as $ \mathbf{p}_{[u]} \to 0^+ $ and $ f(\mathbf{p}_{[u]}) > 0 $ as $ \mathbf{p}_{[u]} \to 1^- $, there must exists an $ \varepsilon \in (0, 1) $ such that:

\begin{equation}
f(\varepsilon) = \log \varepsilon + H(\mathbf{p}) = 0.
\end{equation}

This shows that the function $ f(\mathbf{p}_{[u]}) $ has at least one root in the interval $(0, 1)$.

\textbf{Uniqueness of the root.}
To prove that $\varepsilon$ is the only root of $f(\mathbf{p}_{[u]})$ in $(0, 1)$, we calculate the derivative of $ f(\mathbf{p}_{[u]})$ for $\mathbf{p}_{[u]} \in (0,1)$:

\begin{align}
f'(\mathbf{p}_{[u]}) &= \frac{1}{\mathbf{p}_{[u]}} + \frac{\partial H(\mathbf{p})}  {\partial \mathbf{p}_{[u]}} \\
&= \frac{1}{\mathbf{p}_{[u]}} - \left( \log{\mathbf{p}_{[k]}} + 1 \right) > 0.
\end{align}

Thus, $ f(\mathbf{p}_{[u]}) $ is strictly monotonic on $ (0, 1) $, ensuring the existence of at most one root in this interval.

\textbf{Sign Analysis.}
We analyze the sign of $f(\mathbf{p}_{[u]})$ as:

\begin{equation}
f(\mathbf{p}_{[u]}) = 
    \begin{cases} 
    \log \mathbf{p}_{[u]} + H(\mathbf{p}) > 0, & \text{if } \mathbf{p}_{[u]} > \varepsilon, \\
    \log \mathbf{p}_{[u]} + H(\mathbf{p}) \leq 0, & \text{if } \mathbf{p}_{[u]} \leq \varepsilon.
    \end{cases} \label{eq_sign}
\end{equation}

In conclusion, there exists a unique $\varepsilon \in (0, 1)$ such that $f(\mathbf{p}_{[u]}) = 0$. Moreover, $f(\mathbf{p}_{[u]}) > 0$ for $\mathbf{p}_{[u]} > \varepsilon$, and $f(\mathbf{p}_{[u]}) \leq 0$ for $\mathbf{p}_{[u]} \leq \varepsilon$.

By substituting Eq.(\ref{eq_sign}) into Eq.(\ref{eq_z_alr}), we can prove Eq(\ref{eq:th_logit}) for $\lambda \in (0, 1)$.

\end{proof}

During the label refinement phase,  the learning target $\mathbf{t}(e)$ is updated as $\alpha \mathbf{t}(e-1) + (1 - \alpha)\mathbf{p}(e)$. As $\mathbf{t}(e)$ approaches $\mathbf{p}(e)$, the gradient $\frac{\partial \ell_{ALR}}{\partial \mathbf{z}_{[u]}}$ gradually decreases, thus reducing the model's tendency to overfit incorrect labels. Furthermore, Theorem \ref{th_alr} guarantees that the model successfully leverages clean samples for learning. High-confidence samples are typically clean, as larger values of $\mathbf{p}_{[u]}$ suggest a higher likelihood of the label being correct.

For clean samples ($\mathbf{p}_{[u]}>\varepsilon$), the gradient of ALR $\frac{\partial \ell_{ALR}}{\partial \mathbf{z}_{[u]}}$ is smaller than the gradient of cross entropy loss $\frac{\partial \ell_{ce}}{\partial \mathbf{z}_{[u]}} = \mathbf{p}_{[u]} - \mathbf{t}_{[u]}$.  Since the gradient of the model update is negative, $\frac{\partial \ell_{ALR}}{\partial \mathbf{z}_{[u]}} < \frac{\partial \ell_{ce}}{\partial \mathbf{z}_{[u]}}$ enables the model to prioritize fitting clean samples more effectively. 

As the model continues learning from clean samples during the label refinement phase, its overall performance improves, leading to enhanced predictive accuracy even for noisy samples. Consequently, ALR enhances classification performance by avoiding overfitting to noisy labels while thoroughly learning from clean samples.

\section{Experiment}\label{sec5}
This section evaluates the proposed ALR method across five diverse datasets. Section \ref{subsec51} introduces the comparative methods. Section \ref{subsec52} examines classification performance on benchmark datasets with artificial noise. Section \ref{subsec53} evaluates performance on three real-world datasets. Section \ref{subsec54} provides empirical analysis, while Section \ref{subsec55} evaluates the impact of different components through ablation experiments. Finally, Section \ref{subsec56} explores hyperparameter sensitivity.

\subsection{Comparative Methods}\label{subsec51}
To assess the effectiveness of ALR, we perform experiments comparing it against four types of methods, including Robust Loss Functions, Regularization, Sample Selection and Label Refinement. Below is a summary of these comparative approaches:

\textbf{Robust Loss Functions.} (1) GCE \citep{zhang2018generalized} combines cross-entropy loss and mean squared error, achieving a balance between the two by controlling hyperparameters. (2) SCE \citep{wang2019symmetric} enhances robustness to noisy labels by adding a reverse cross-entropy loss to the standard cross-entropy loss. 

\textbf{Regularization.} (1) ELR \citep{liu2020early} introduces a regularization term that utilizes the model's historical integrated predictions as the learning target. By leveraging early learning, it reduces the tendency to memorize noisy labels. (2) CCR \citep{cheng2022class} proposes forward-backward cycle consistency regularization, effectively reducing the estimation error of the transfer matrix. (3) NESTED \citep{chen2021boosting} introduces NESTED Dropout to regularize the neural network to resist label noise. 

\textbf{Sample Selection.} (1) NLNL \citep{kim2019nlnl}, which proposes a negative learning approach utilizing complementary labels and subsequently combines negative learning with positive learning (using labeled data) to improve the identification and filtering of noisy samples. (2) Co-teaching \citep{han2018co} employs two networks, where each network selects samples with low loss values to guide the training of the other. (3) CoDis \citep{xia2023combating}, which also utilizes a dual network approach, training by selecting samples that exhibit high variability in predictions between the two networks. (4) TS$^3$-net \citep{jiang2022sparse} introduces a novel two-stream sample selection network comprising a sparse and dense sub-network. It also designs a multi-phase learning framework, enabling the sparse sub-network to achieve strong classification performance despite label noise. 

\textbf{Label refurbishment.} (1) Bootstrap \citep{2015TRAINING} reduces the model’s tendency to overfit noisy labels by combining the original label with the current model prediction to form the training target.. (2) SELC \citep{lu2022selc}, which leverages early learning phenomena to develop a sufficiently good model, subsequently using temporal integration to update the labels. (3) PENCIL \citep{yi2019probabilistic}, which proposes a novel training framework that updates the label estimates as probability distributions while concurrently updating the network parameters. (4) CWD \citep{gong2022class}, which develops an enhanced centroid estimator that utilizes the local information of each class, combating label noise by correcting the loss function for incorrect labels. (5) PLC \citep{prog_noise_iclr2021} proposes a progressive label correction algorithm that selectively corrects labels by establishing a continuously increasing threshold.

\subsection{Classification under Artificial Noise}\label{subsec52}

%\noindent\textbf{Datasets and Artificial Noise}
\textbf{Datasets and Artificial Noise.} 
We evaluate the classification performance of ALR under varying degrees of artificial label noise by artificially corrupting a portion of the labels in two benchmark datasets, CIFAR-10 and CIFAR-100 \citep{krizhevsky2009learning}. These datasets are widely used to assess methods designed to handle label noise. Detailed statistics for two datasets are presented in Table \ref{tab_ciafr}.

Since these datasets are originally clean, we generate two types of artificial label noise—symmetric and asymmetric—by using the methodology outlined in \citep{patrini2017making}. Symmetric noise is introduced by randomly selecting training samples according to the noise rate and uniformly replacing their labels with all possible classes. Asymmetric noise is generated by selecting samples from each class according to the noise rate and flipping their labels to the next class, iterating through all classes in this manner. For asymmetric noise in CIFAR-10, labels are flipped according to the mappings outlined in \citep{patrini2017making}: Truck \(\rightarrow\) Automobile, Bird \(\rightarrow\) Airplane, Deer \(\rightarrow\) Horse, and Cat \(\leftrightarrow\) Dog, following the specified noise rate. These artificial noises simulate real-world labeling errors and provide a robust evaluation framework for noise-handling methods.

\begin{table}[htbp]
    \caption{Description of two CIFAR datasets used in experiments with artificial label noise. }\label{tab_ciafr}%
    \begin{tabular*}{\textwidth}{@{}@{\extracolsep{\fill}}lcccc@{}}
    \toprule
    Dataset & \#Train & \#Test & \#Classes & Image Size \\
    \midrule
    CIFAR-10 & 50K & 10K & 10 & 32 $\times$ 32 \\
    CIFAR-100 & 50K & 10K & 100 & 32 $\times$ 32 \\
    \bottomrule‌
    \end{tabular*}
\end{table}

%\noindent\textbf{Experimental Details} 
\textbf{Experimental Details.} 
All experiments maintained consistent data preprocessing and network architecture with previous studies to ensure fair comparisons with existing work. For data preprocessing, we employ a simple data augmentation strategy, which includes random cropping with 4 pixels of padding on each side and horizontal flipping, along with normalization. We employ ResNet-34 \citep{he2016deep} as the backbone network for both datasets.

The model is trained with a batch size of 128, using SGD as the optimizer, a momentum of 0.9, and a weight decay of 0.001. Training is conducted for 200 epochs on both datasets. Additionally, the initial learning rate was set to 0.02, reducing by a factor of 10 after 40 and 80 epochs. 

We maintain consistent hyperparameter settings under varying noise conditions in CIFAR-10 and CIFAR-100: the temporal integration parameter was set to 0.9, the entropy loss weight to 0.2, and the initial warm-up phase to 30 epochs. Many methods assume prior knowledge of noise characteristics. When the dataset and noise conditions change, further adjustments to the hyperparameters are often necessary to achieve satisfactory results. In contrast, we use the same hyperparameters to highlight the method’s robustness and effectiveness across various scenarios.

\begin{table}[htbp]
    \caption{The test accuracy (\%) on CIFAR-10 with symmetric and asymmetric label noise. The best results are shown in bold, while the second-best are underlined.}\label{tab_cifar10}
    \resizebox{\textwidth}{!}{
    \begin{tabular}{lcccccc}
    \toprule
    & \multicolumn{6}{c}{\textbf{CIFAR-10}} \\
    \midrule
    \multirow{2}{*}{\textbf{Method}} & \multicolumn{4}{c}{\textbf{Symm}} & \multicolumn{2}{c}{\textbf{Asymm}} \\
    \cmidrule(lr){2-5} \cmidrule(l){6-7}
    & 20\% & 40\% & 60\% & 80\% & 20\% & 40\% \\
    \midrule
    CE & 86.98$\pm$0.12 & 81.88$\pm$0.29 & 74.14$\pm$0.56 & 53.82$\pm$1.04 & 88.59$\pm$0.34 & 80.11$\pm$1.44 \\
    GCE & 89.83$\pm$0.20 & 87.13$\pm$0.22 & 82.54$\pm$0.23 & 64.07$\pm$1.38 & 89.33$\pm$0.17 & 76.74$\pm$0.61 \\
    Bootstrap & 86.23$\pm$0.23 & 82.23$\pm$0.37 & 75.12$\pm$0.56 & 54.12$\pm$1.32 & 88.26$\pm$0.24 & 81.21$\pm$1.47 \\
    PENCIL & - & - & - & - & 92.43 & 91.01 \\
    NLNL & \textbf{94.23} & \textbf{92.43} & \underline{88.32} & - & 93.35 & 89.86 \\
    SCE & 89.83$\pm$0.32 & 87.13$\pm$0.26 & 82.81$\pm$0.61 & 68.12$\pm$0.81 & 90.44$\pm$0.27 & 82.51$\pm$0.45 \\
    CCR & 90.44$\pm$0.19 & 87.30$\pm$0.25 & 81.01$\pm$0.25 & - & 90.55$\pm$0.03 & 87.29$\pm$0.05 \\
    ELR & 91.16$\pm$0.08 & 89.15$\pm$0.17 & 86.12$\pm$0.49 & 73.86$\pm$0.61 & \underline{93.52$\pm$0.23} & 90.12$\pm$0.47 \\
    SELC & 93.09$\pm$0.02 & 91.18$\pm$0.06 & 87.25$\pm$0.09 & \underline{74.13$\pm$0.14} & - & \underline{91.05$\pm$0.11} \\
    \cmidrule{1-7}
    ALR (our) & \underline{93.65$\pm$0.09} & \underline{91.89$\pm$0.20} & \textbf{89.00$\pm$0.23} & \textbf{78.47$\pm$0.6} & \textbf{94.40$\pm$0.14} & \textbf{92.84$\pm$0.07} \\
    \bottomrule
    \end{tabular}
    }
\end{table}

\setlength{\tabcolsep}{4pt} % 设置列间距为4pt
\begin{table}[htbp]
    \caption{Test accuracy (\%) on CIFAR-100 with artificially injected symmetric and asymmetric label noise in the training set.}\label{tab_cifar100}
    \resizebox{\textwidth}{!}{
    \begin{tabular}{lcccccc}
    \toprule
    & \multicolumn{6}{c}{\textbf{CIFAR-100}} \\
    \midrule
    \multirow{2}{*}{\textbf{Method}} & \multicolumn{4}{c}{\textbf{Symm}} & \multicolumn{2}{c}{\textbf{Asymm}} \\
    \cmidrule(lr){2-5} \cmidrule(l){6-7}
    & 20\% & 40\% & 60\% & 80\% & 20\% & 40\% \\
    \midrule
    CE & 58.72$\pm$0.26 & 48.20$\pm$0.65 & 37.41$\pm$0.94 & 18.10$\pm$0.82 & 59.20$\pm$0.18 & 42.74$\pm$0.61 \\
    GCE & 66.81$\pm$0.42 & 61.77$\pm$0.24 & 53.16$\pm$0.78 & 29.16$\pm$0.74 & 66.59$\pm$0.22 & 47.22$\pm$1.15 \\
    Bootstrap & 58.27$\pm$0.21 & 47.66$\pm$0.55 & 34.68$\pm$1.10 & 21.64$\pm$0.97 & 62.14$\pm$0.32 & 45.12$\pm$0.57 \\
    PENCIL & - & \underline{69.12$\pm$0.62} & 57.70$\pm$3.86 & fail & \underline{74.70$\pm$0.56} & 63.61$\pm$0.23 \\
    NLNL & 71.52 & 66.39 & 56.51 & - & 63.12 & 45.70 \\
    SCE & 70.38$\pm$0.13 & 62.27$\pm$0.22 & 54.82$\pm$0.57 & 25.91$\pm$0.44 & 72.56$\pm$0.22 & 69.32$\pm$0.87 \\
    CCR & 67.74$\pm$0.17 & 61.71$\pm$0.20 & 49.30$\pm$0.82 & - & 68.34$\pm$0.24 & 62.64$\pm$0.49 \\
    ELR & \underline{74.21$\pm$0.22} & 68.28$\pm$0.31 & 59.28$\pm$0.67 & 29.78$\pm$0.56 & 74.03$\pm$0.31 & \underline{73.26$\pm$0.64} \\
    SELC & 73.63$\pm$0.07 & 68.46$\pm$0.10 & \underline{59.41$\pm$0.06} & \textbf{32.63$\pm$0.06} & - & 70.82$\pm$0.09 \\
    \cmidrule{1-7}
    ALR (our) & \textbf{74.50$\pm$0.19} & \textbf{69.80$\pm$0.38} & \textbf{62.56$\pm$0.36} & \underline{31.96$\pm$0.54} & \textbf{77.06$\pm$0.13} & \textbf{75.02$\pm$0.18} \\
    \bottomrule
    \end{tabular}
    }
\end{table}

\noindent\textbf{Experimental Results}
We evaluate the performance of ALR on the CIFAR-10 and CIFAR-100 datasets, which are corrupted with symmetric and asymmetric label noise at different noise levels. The results are shown in Tables \ref{tab_cifar10} and \ref{tab_cifar100}. The reported results represent the mean accuracy and standard deviation. Results for other methods are taken directly from their respective original papers.

From these tables, we observe that the ALR method outperforms all compared methods under most noise on both datasets, demonstrating the effectiveness and robustness of our approach. Under asymmetric label noise, the small loss values of mislabeled samples make it difficult to distinguish clean samples from noisy ones based on the loss criterion. Many methods perform poorly under asymmetric noise because they rely on small loss criteria or confidence. Compared to baseline methods, the ALR method significantly improves performance under asymmetric label noise. Because ALR does not simply use confidence in a mechanical way, it indirectly enhances the learning of clean samples by influencing the gradient. Overall, the ALR effectively mitigates the detrimental effects of label noise and outperforms other methods.

\subsection{Classification under Real-World Label Noise}\label{subsec53}

\noindent\textbf{Dataset Introduction.} 
We evaluate the performance of ALR on three real-world datasets: ANIMAL-10 \citep{song2019selfie}, Clothing1M \citep{xiao2015learning}, and WebVision \citep{li2017webvision}. These image datasets are collected from the Internet and contain incorrect or ambiguous annotations, providing a robust framework for assessing ALR's effectiveness in addressing label noise challenges in practical scenarios.

Among these, ANIMAL-10 consists of images of animals across ten categories, all annotated by humans. It includes five pairs of easily confused animals, resulting in an annotation error rate of approximately 8\%. Clothing1M consists of 1 million images from 14 categories. This dataset features real noisy labels, with an estimated annotation error rate of 38.5\%. 

WebVision is a dataset containing 2.4 million images, which are annotated with label information for 1,000 classes derived from ImageNet. Consistent with prior work \citep{chen2019understanding}, we use the first 50 classes (approximately 66K samples) from the Google Images subset of WebVision, along with the ILSVRC-2012 validation set for training and testing. The estimated probability of labeling errors in this dataset is approximately 20\%. Additionally, the WebVision dataset exhibits a highly imbalanced data distribution, further challenging model performance. Table \ref{tab_real} summarizes the details of the three real-world datasets.

\begin{table}[htbp]
    \caption{Description of the three real-world datasets with label noise}\label{tab_real}%
    
    \begin{tabular*}{\textwidth}{@{}@{\extracolsep{\fill}}lcccccc@{}}
    \toprule
    Data set & \#Train & \#Val & \#Test & \#Classes & Image Size & Noise rate(\%)\\
    \midrule
    ANIMAL-10 & 50K & - & 5K & 10 & 64 $\times$ 64 & 8.0 \\
    Clothing1M & 1M & 14K & 10K & 14 & 224 $\times$ 224 & 38.5 \\
    Webvision1.0 & 66K & - & 2.5K & 50 & 256 $\times$ 256 & 20.0 \\
    \bottomrule
    \end{tabular*}
\end{table}

\noindent\textbf{Experimental Details.}
During data preprocessing, we apply basic augmentation techniques to the training sets of the three datasets, including random cropping, horizontal flipping, and normalization. Specifically, the ANIMAL-10 dataset does not involve random cropping. For the Clothing1M dataset, images are initially resized to 256 $\times$ 256 pixels before random cropping to 224 $\times$ 224 pixels. Meanwhile, images in the WebVision dataset are randomly cropped to dimensions of 227 $\times$ 227 pixels. 

The training for all three datasets is conducted using SGD, with a momentum of 0.9 and a weight decay of 0.001 (0.0005 for the WebVision dataset). For the ANIMAL-10 dataset, we use batch-normalized VGG19 as the backbone network with a batch size of 64 \citep{song2019selfie}. The initial learning rate is 0.02, which is divided by 10 after 50 and 75 epochs (100 epochs in total). In the case of the Clothing1M dataset, we use ResNet-50 \citep{he2016deep} pre-trained on the ImageNet dataset as the backbone network with a batch size of 64. To maintain the data distribution with noisy labels balanced, 2000 batches are randomly sampled from the training set in each epoch. The initial learning rate is 0.001, which decays by a factor of 10 after 10 and 20 epochs (30 epochs in total). For the WebVision dataset, we use Inception-ResNetV2 \citep{szegedy2016rethinking} as the backbone network with a batch size of 32. The initial learning rate is 0.02, which is multiplied by 0.1 after 40 and 80 epochs (100 epochs in total). 

\setlength{\tabcolsep}{6.5pt} % 设置列间距为4pt
\begin{table}[htbp]
    \caption{Classification accuracy (\%) results on ANIMAL-10. All methods use VGG19.}\label{tab_an10}
    \resizebox{\textwidth}{!}{
    \begin{tabular}{ccccccc}
    \toprule
     Cross Entropy & Nested & PLC & CWD & SELC & TS$^3$-Net & ALR(our) \\
    \midrule
    79.40$\pm$0.14 & 81.30$\pm$0.60 & 83.40$\pm$0.43 & 82.52 & \underline{83.73$\pm$0.06} & 81.36 & \textbf{85.79$\pm$0.05} \\
    \bottomrule
    \end{tabular}
    }
\end{table}

\setlength{\tabcolsep}{5pt} % 设置列间距为4pt
\begin{table}[htbp]
    \caption{Classification accuracy (\%) results on Clothing1M. All methods use a ResNet50 pre-trained on the ImageNet dataset.}\label{tab_1M}
    \resizebox{\textwidth}{!}{
    \begin{tabular}{cccccccccc}
    \toprule
    Cross Entropy & SCE & FINE & Nested & PLC & ELR & SELC & CoDis & TS$^3$-Net & ALR(our) \\
    \midrule
    69.21 & 71.02 & 72.91 & 73.10 & \underline{74.02} & 72.87 & 74.01 & 71.60 & 72.2 & \textbf{74.22} \\
    \bottomrule
    \end{tabular}
    }
\end{table}

\setlength{\tabcolsep}{4pt} % 设置列间距为4pt
\begin{table}[htbp]
    \caption{Classification accuracy (\%) results on (mini) WebVision. All methods use InceptionResNetV2.}\label{tab_web}
    \resizebox{\textwidth}{!}{
    \begin{tabular}{lcccccccc}
        \toprule
        & Method & Co-teaching & Iterative-CV & SELC & CoDis & TS$^3$-Net & ELR & ALR(our) \\
        \midrule
        WebVision & top1 & 63.58 & 65.24 & 74.38 & 63.80 & 73.48 & \textbf{76.26} & \underline{74.72} \\
                  & top5 & 85.20 & 85.34 & 90.66 & 85.54 & \underline{90.92} & \textbf{91.26} & 90.84 \\
        \midrule
        ILSVRC12 & top1 & 61.48 & 61.60 & \underline{70.85} & 62.26 & 70.27 & 68.71 & \textbf{71.56} \\
                  & top5 & 84.70 & 84.98 & \underline{90.74} & 85.39 & 90.46 & 87.84 & \textbf{91.00} \\
        \bottomrule
    \end{tabular}
    }
\end{table}

\noindent\textbf{Experimental Results.}
We evaluate the performance of ALR on three real-world datasets, and the results are presented in Tables \ref{tab_an10}, \ref{tab_1M}, and \ref{tab_web}. The Results of other methods are taken from their respective original papers. 

Table \ref{tab_an10} presents the comparison between ALR and the baseline method on ANIMAL-10, where ALR demonstrates a notable improvement in performance. Table \ref{tab_1M} compares the classification accuracy of ALR and the baseline method on Clothing1M. ALR outperforms all the comparison methods, including Co-teaching, which uses a dual network for sample selection. In table \ref{tab_web}, the results for ALR and the baseline methods on the (mini) WebVision dataset are listed. While ALR's performance is slightly inferior to that of ELR, this can be attributed to the significant class imbalance \citep{liang2024nonlocal} in the WebVision dataset. This imbalance leads to a scarcity of samples for certain categories, making it difficult for the model to effectively learn these classes during the early training phases. Although label quality improves over time, the model remains less confident in these underrepresented samples, ultimately hindering its ability to make accurate predictions. 

\subsection{Empirical Analysis}\label{subsec54}
To validate the ALR's effectiveness in mitigating the model’s tendency to overfit incorrect labels, we performed experiments on the CIFAR-10 dataset with 40\% symmetric noise. The experimental outcomes are illustrated in Figure \ref{fig1}.

Figure \ref{fig1} (a) and (b) illustrate the model's memorization behavior on clean and mislabeled samples using cross-entropy loss (CE). In contrast, Figure \ref{fig1} (c) and (d) show the model's memorization behavior on clean and mislabeled samples using ALR. It is clear that ALR significantly reduces the model's tendency to fit noisy labels compared to the CE loss. These findings highlight ALR's ability to alleviate overfitting to mislabeled data, thereby preventing performance decline and confirming its effectiveness in improving model robustness.

\begin{figure}[htbp]
    \centering
    \subfigure[CE]{\includegraphics[width=0.49\textwidth]{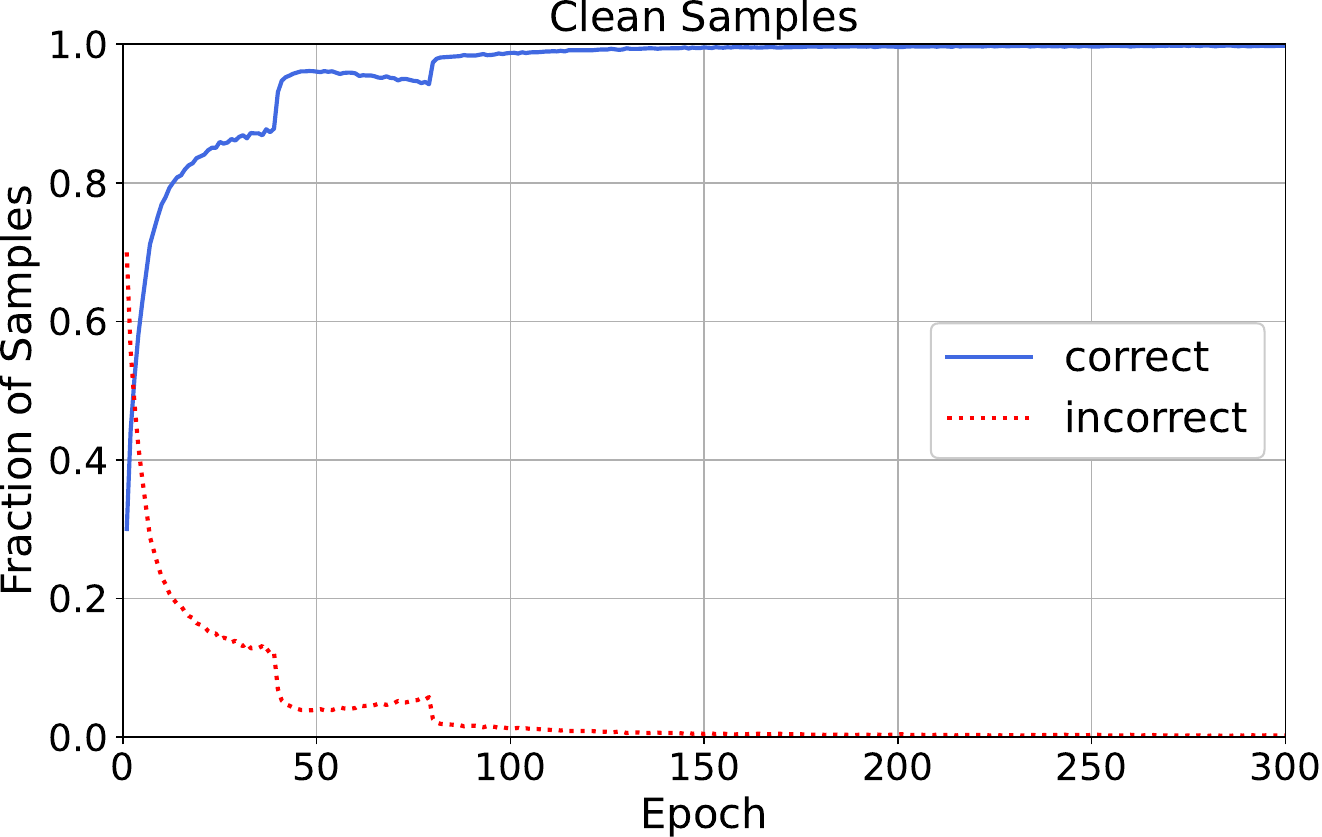}}
    \subfigure[CE]{\includegraphics[width=0.49\textwidth]{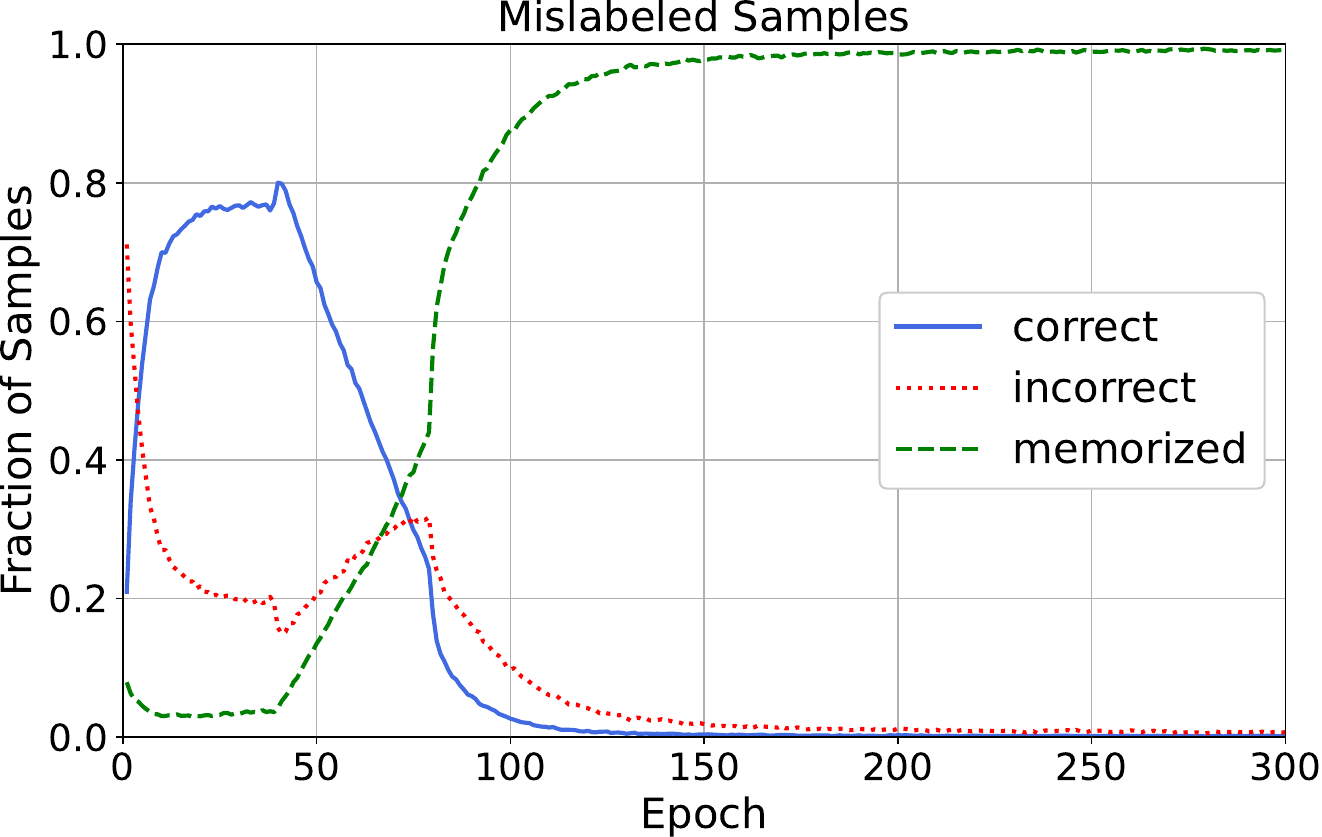}}
    \subfigure[ALR]{\includegraphics[width=0.49\textwidth]{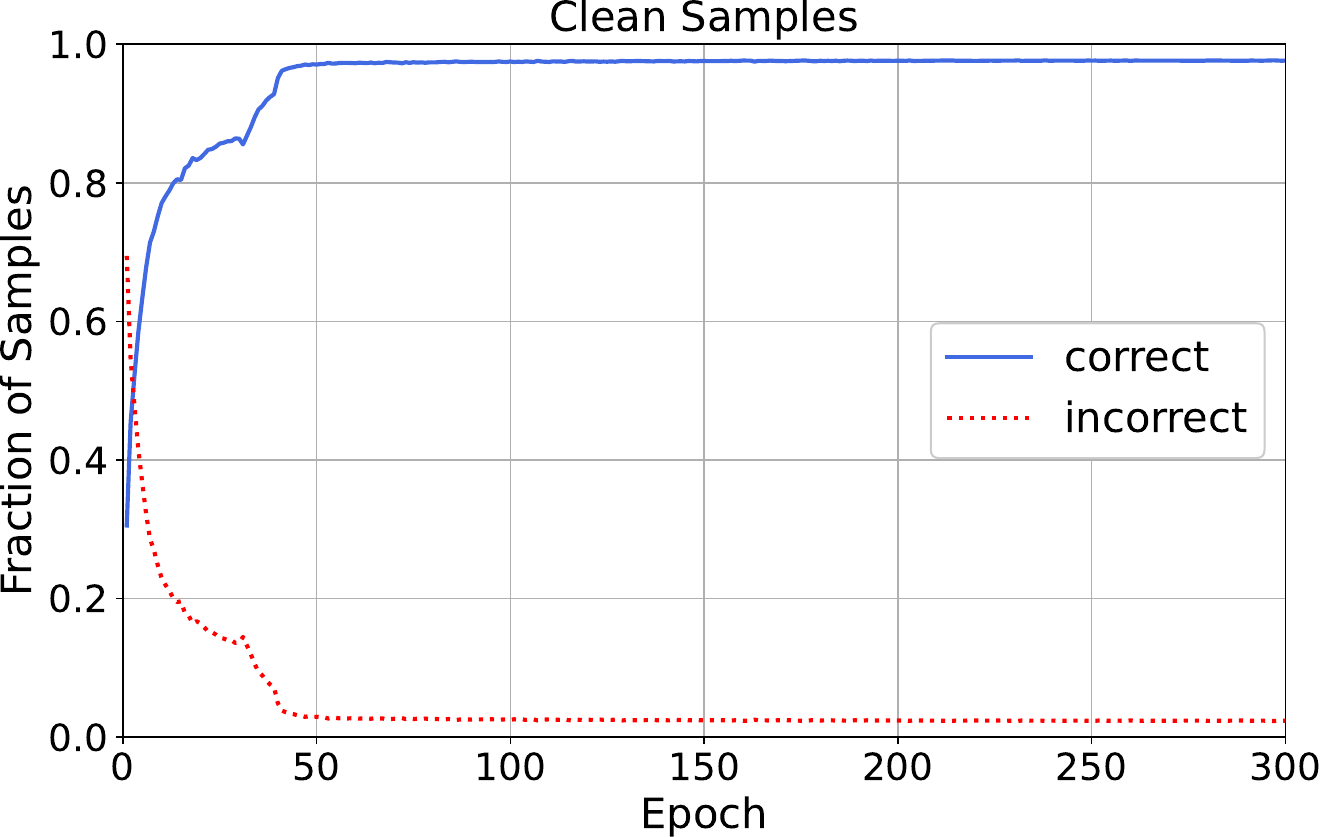}}
    \subfigure[ALR]{\includegraphics[width=0.49\textwidth]{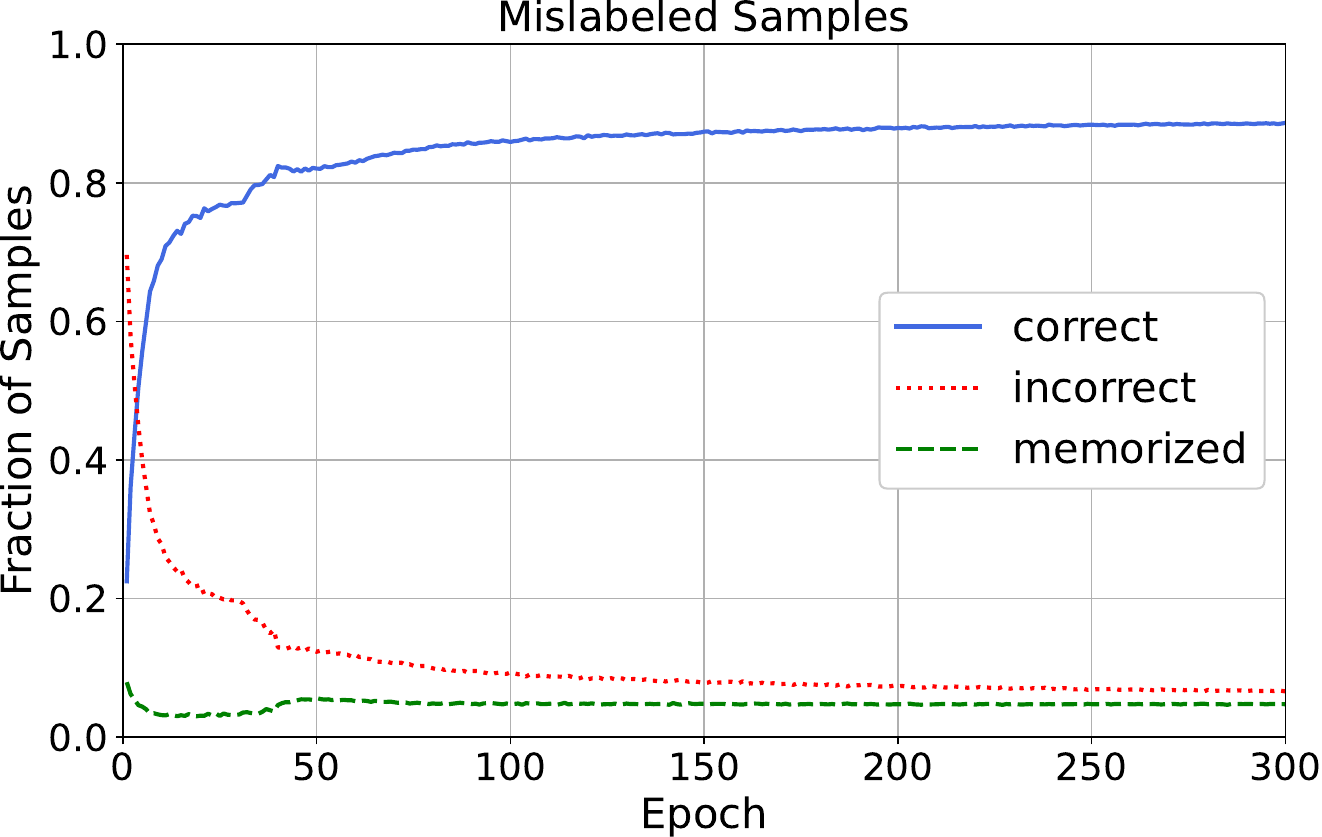}}

  \caption{We studied the memorization phenomenon of DNNs on CIFAR-10 using CE and ALR under 40\% symmetric noise. Plots (a) and (c) illustrate the proportion of correct predictions (in blue) and incorrect predictions (in red) on clean samples across different training phases. Plots (b) and (d) show the proportion of correct predictions (in blue), incorrect predictions (i.e., predictions that are neither equal to the true label nor the erroneous label, in red), and memorization (i.e., predictions that match the erroneous label, in green) on mislabeled samples across different training phases.}
    \label{fig1}
\end{figure}

\begin{figure}[htbp]
    \centering
    \subfigure[LR-Sym-40\%]{\includegraphics[width=0.32\textwidth]{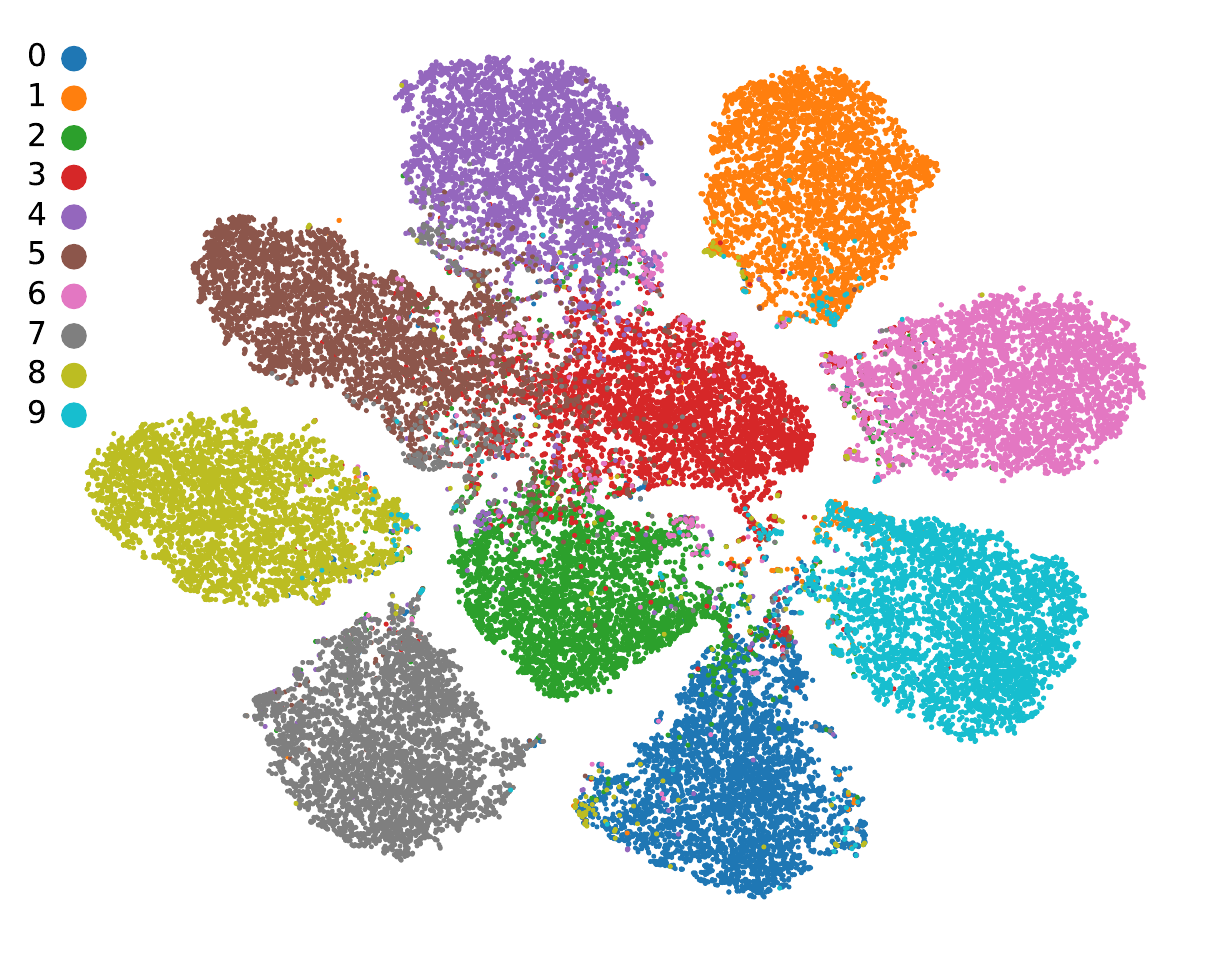}}
    \subfigure[LR-Sym-60\%]{\includegraphics[width=0.32\textwidth]{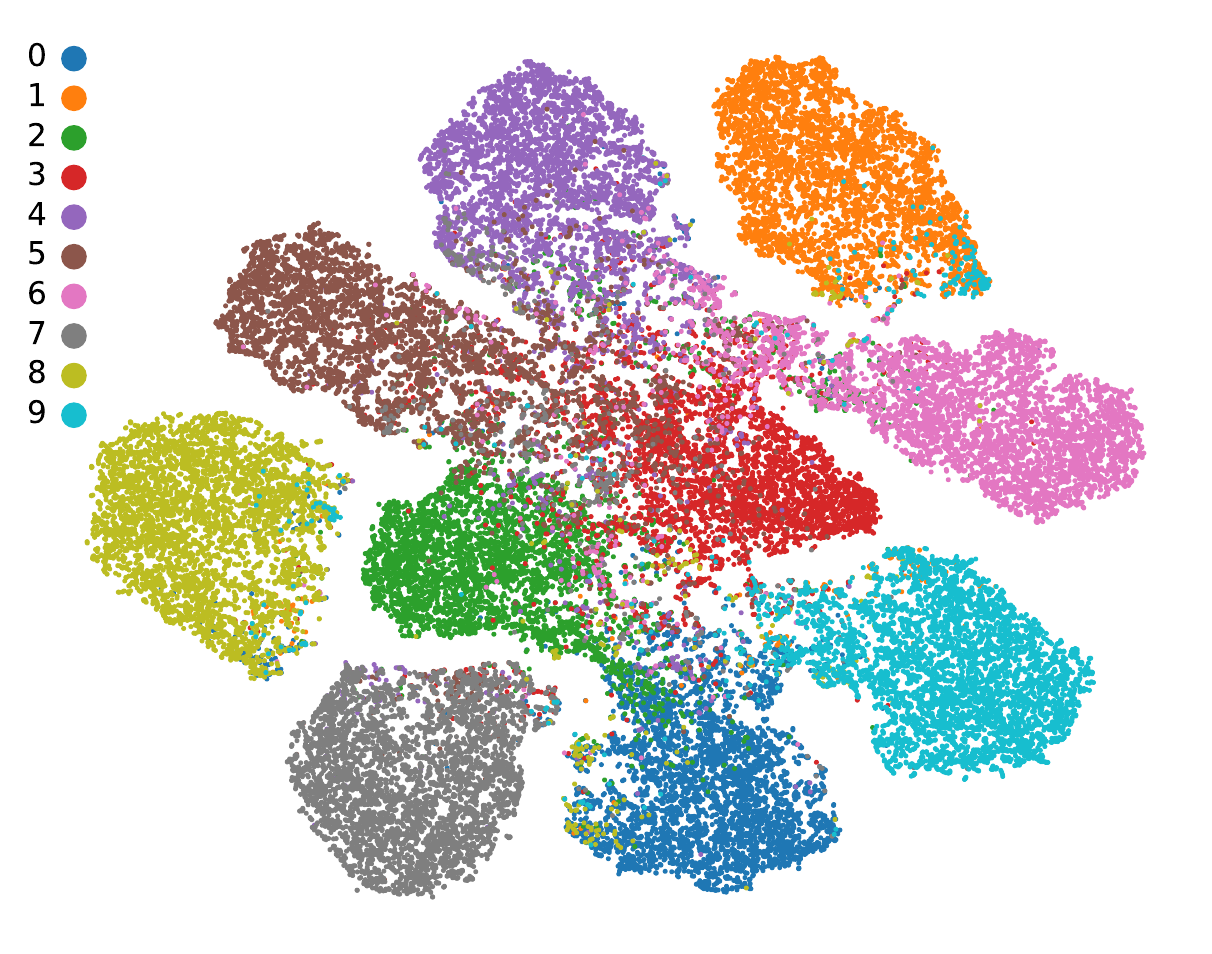}}
    \subfigure[LR-Asym-40\%]{\includegraphics[width=0.32\textwidth]{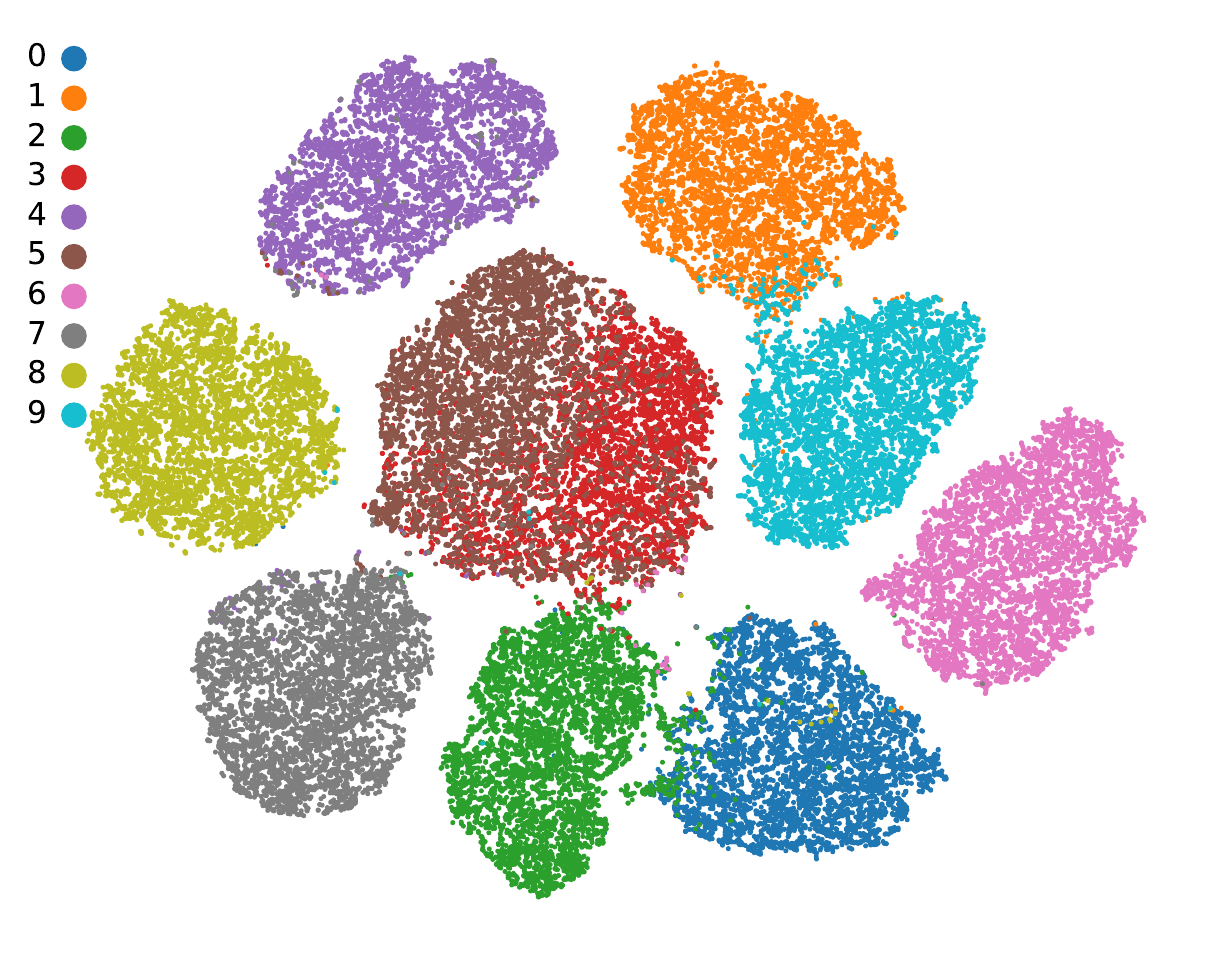}}
    \subfigure[ALR-Sym-40\%]{\includegraphics[width=0.32\textwidth]{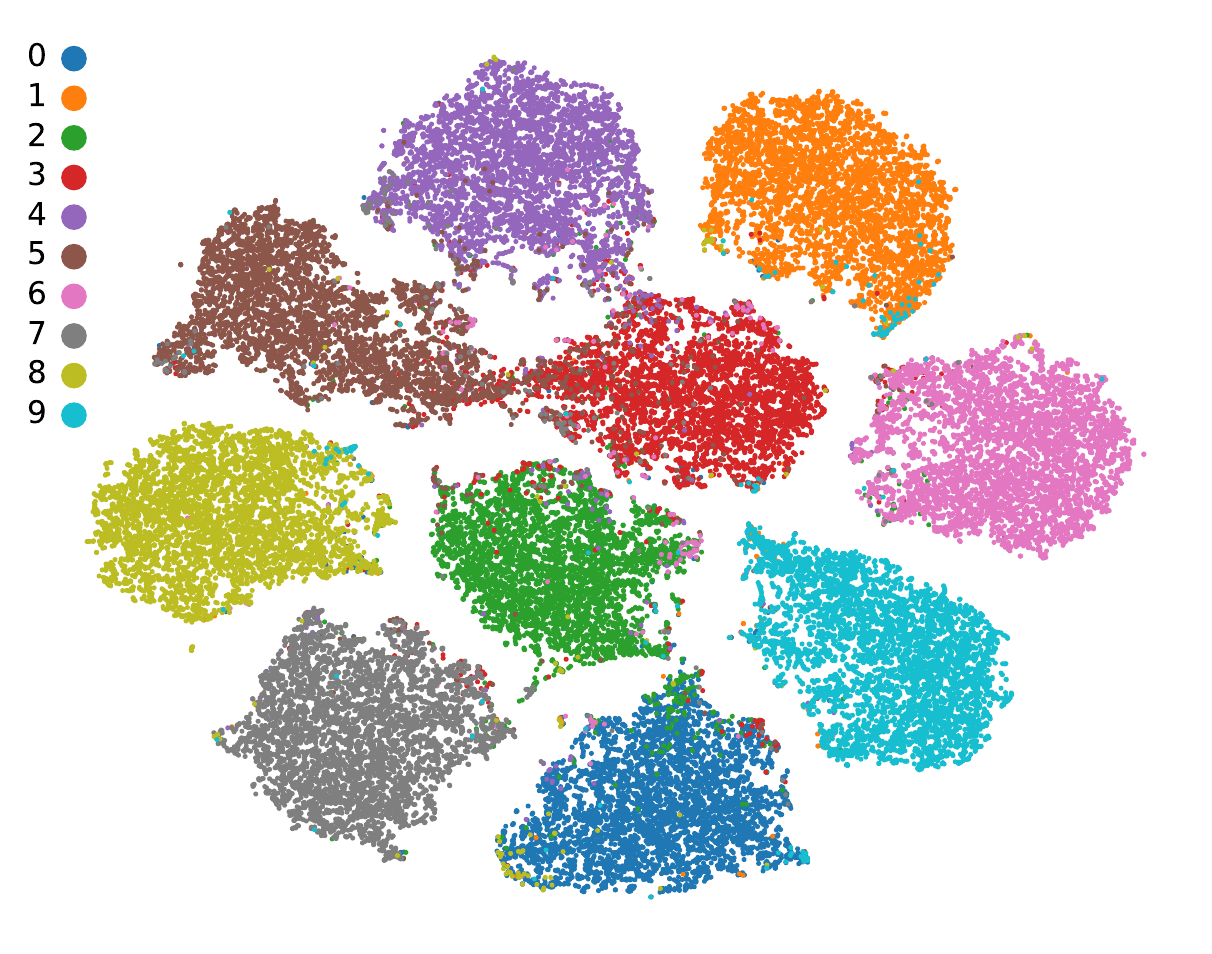}}
    \subfigure[ALR-Sym-60\%]{\includegraphics[width=0.32\textwidth]{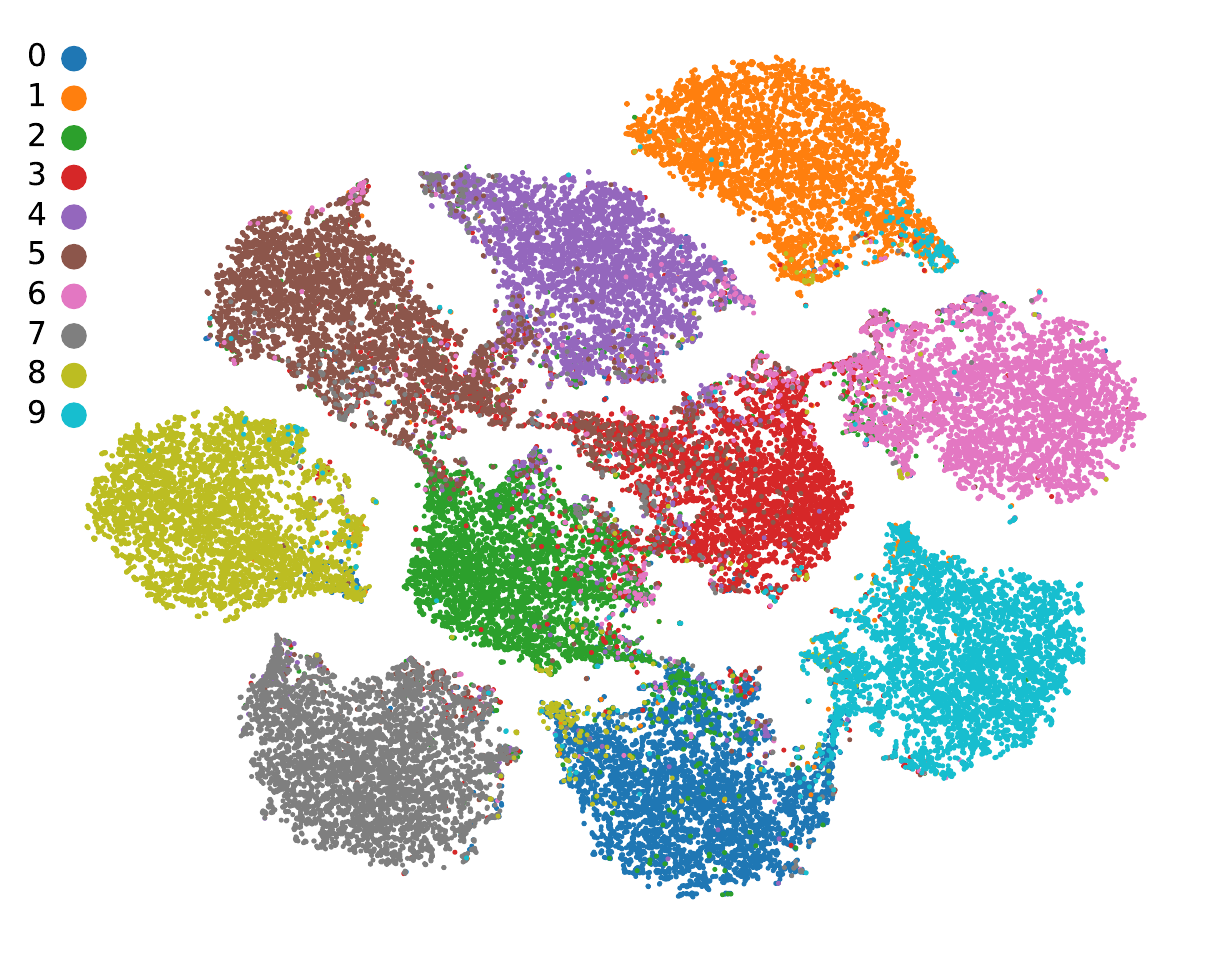}}
    \subfigure[ALR-Asym-40\%]{\includegraphics[width=0.32\textwidth]{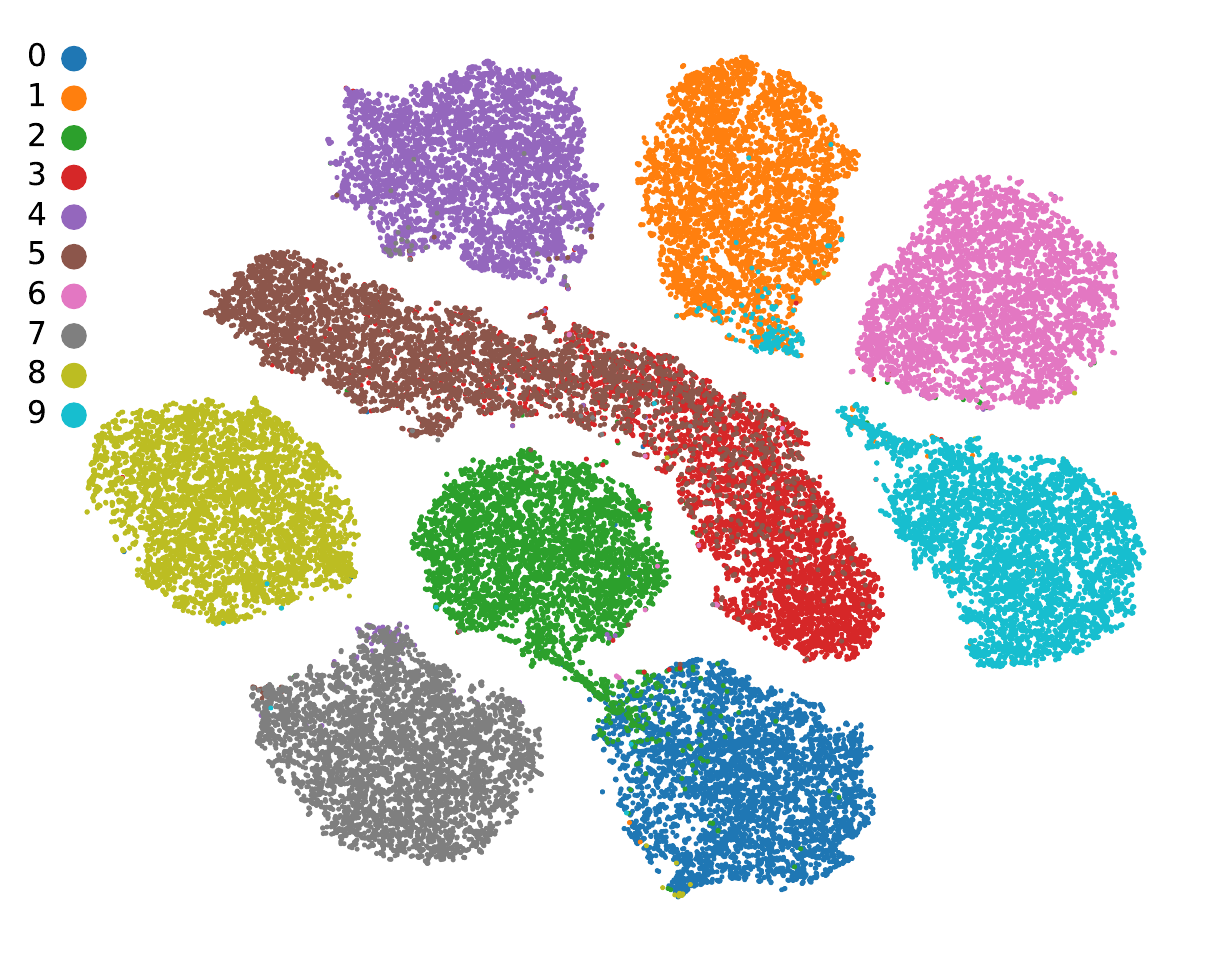}}

    \caption{2D t-SNE visualization of clean samples in the CIFAR-10 training set for the simplest label refurbishment method (LR) and ALR under different levels of label noise.}
    \label{fig2}
\end{figure}

\begin{figure}[htbp]
    \centering

    \subfigure[Sym-20\%]{\includegraphics[width=0.32\textwidth]{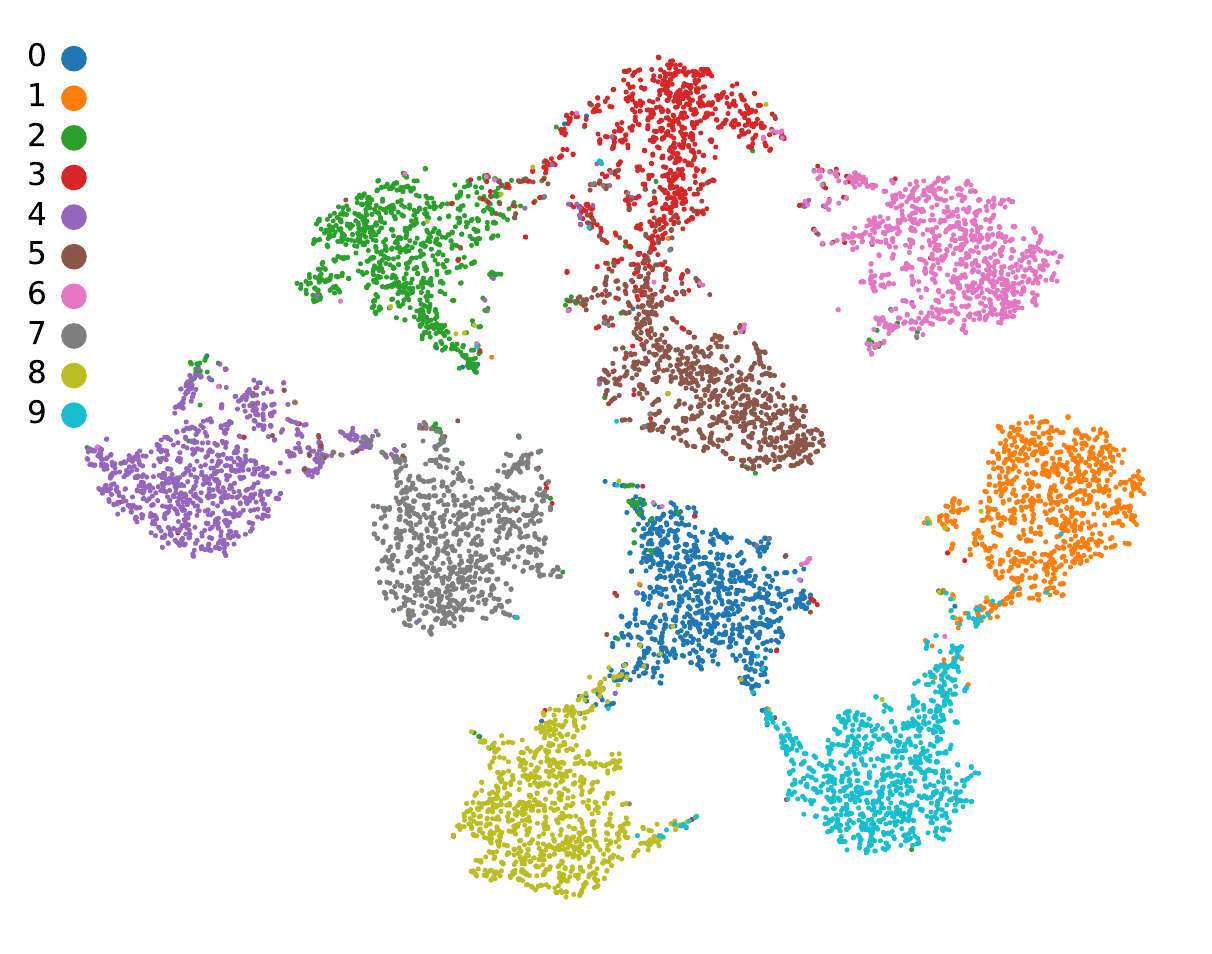}}
    \subfigure[Sym-40\%]{\includegraphics[width=0.32\textwidth]{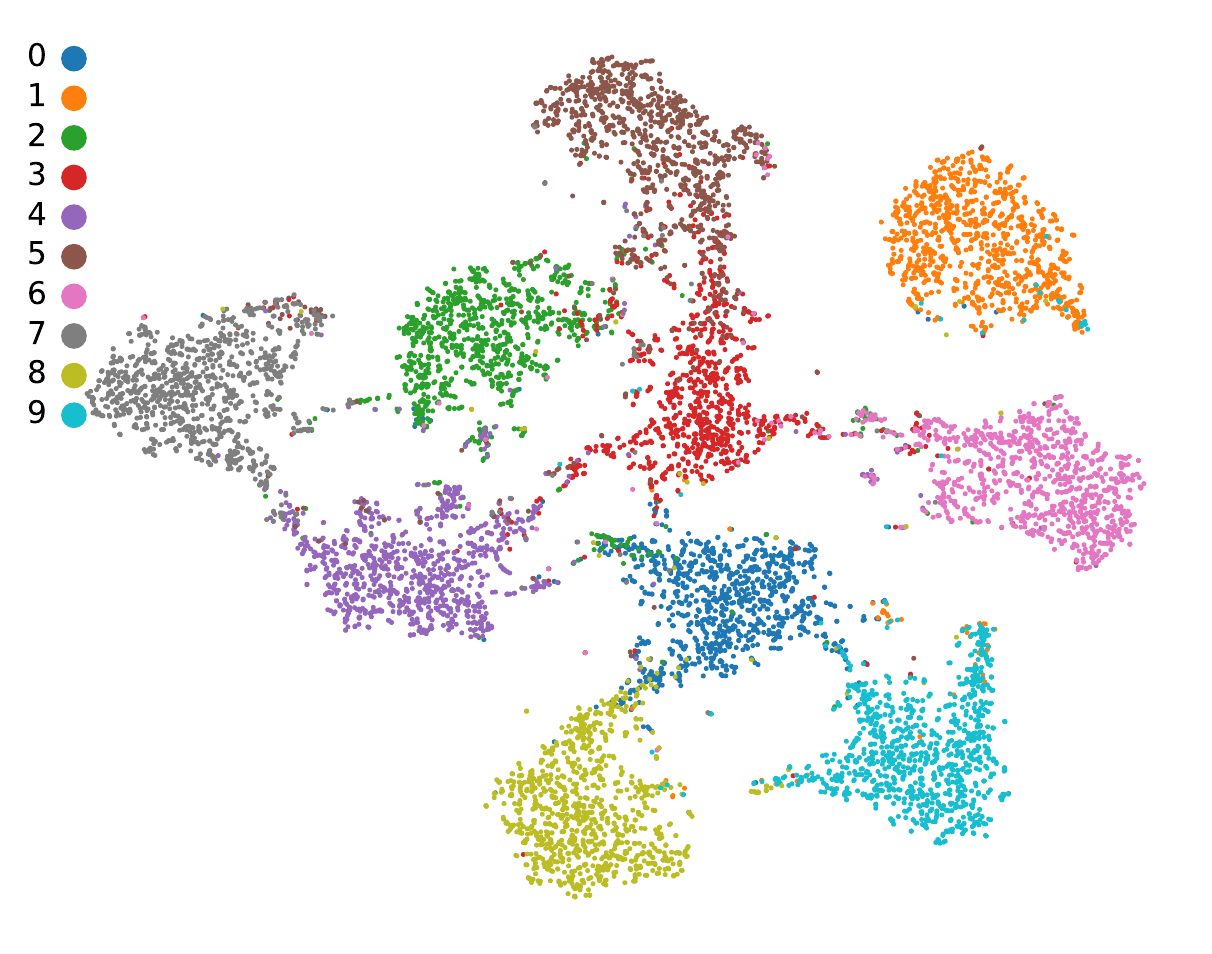}}
    \subfigure[Sym-60\%]{\includegraphics[width=0.32\textwidth]{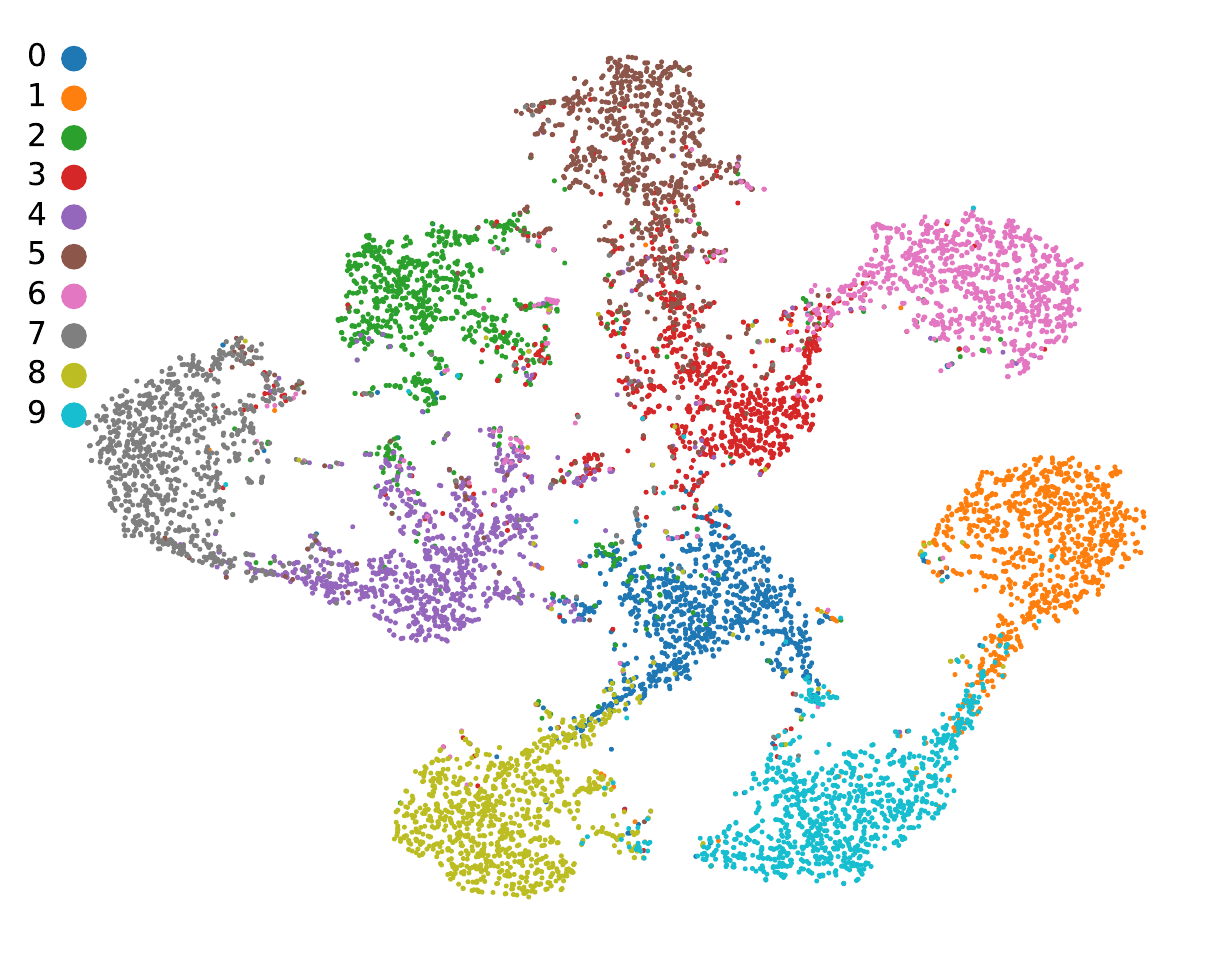}}
    \subfigure[Sym-80\%]{\includegraphics[width=0.32\textwidth]{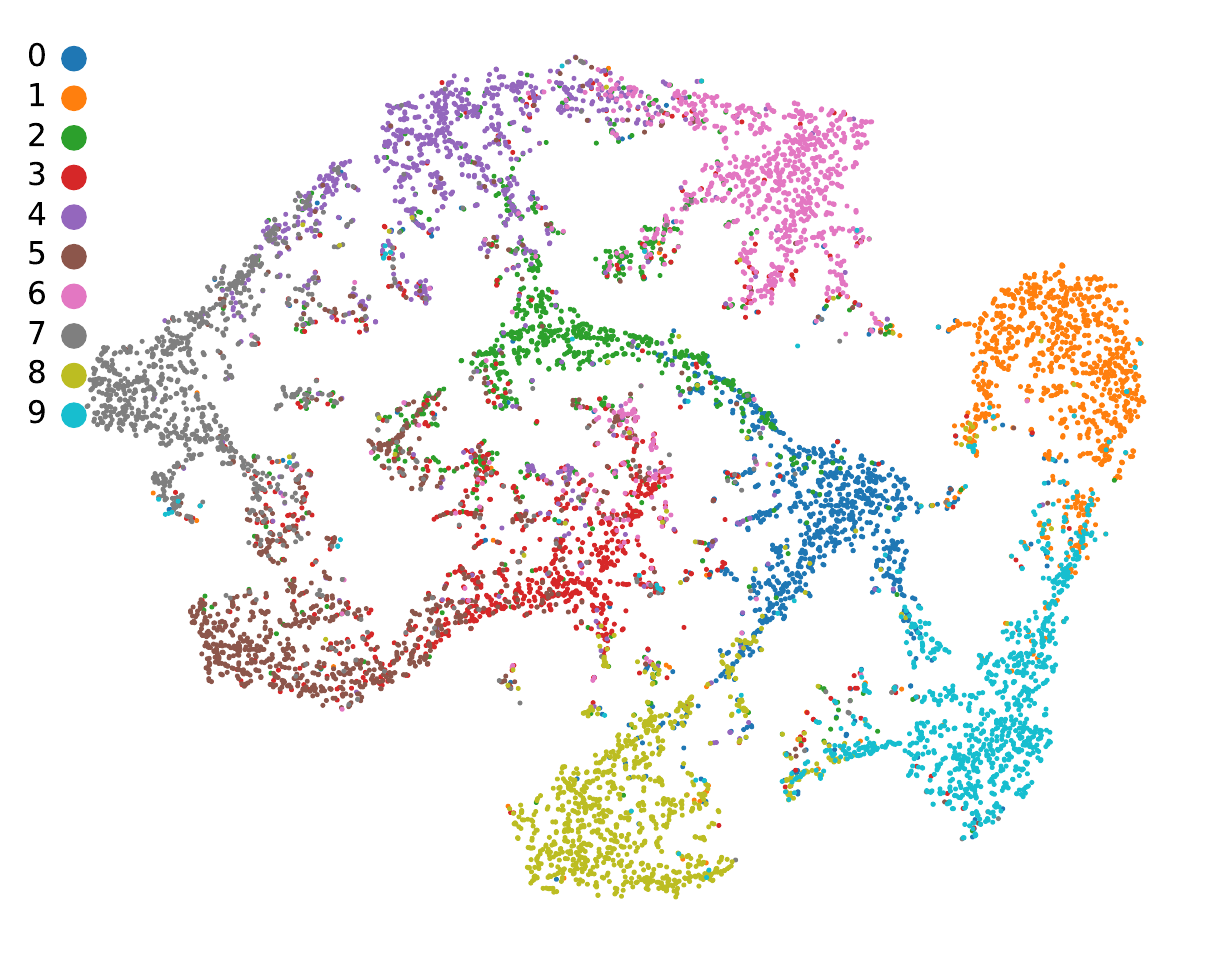}}
    \subfigure[Asym-20\%]{\includegraphics[width=0.32\textwidth]{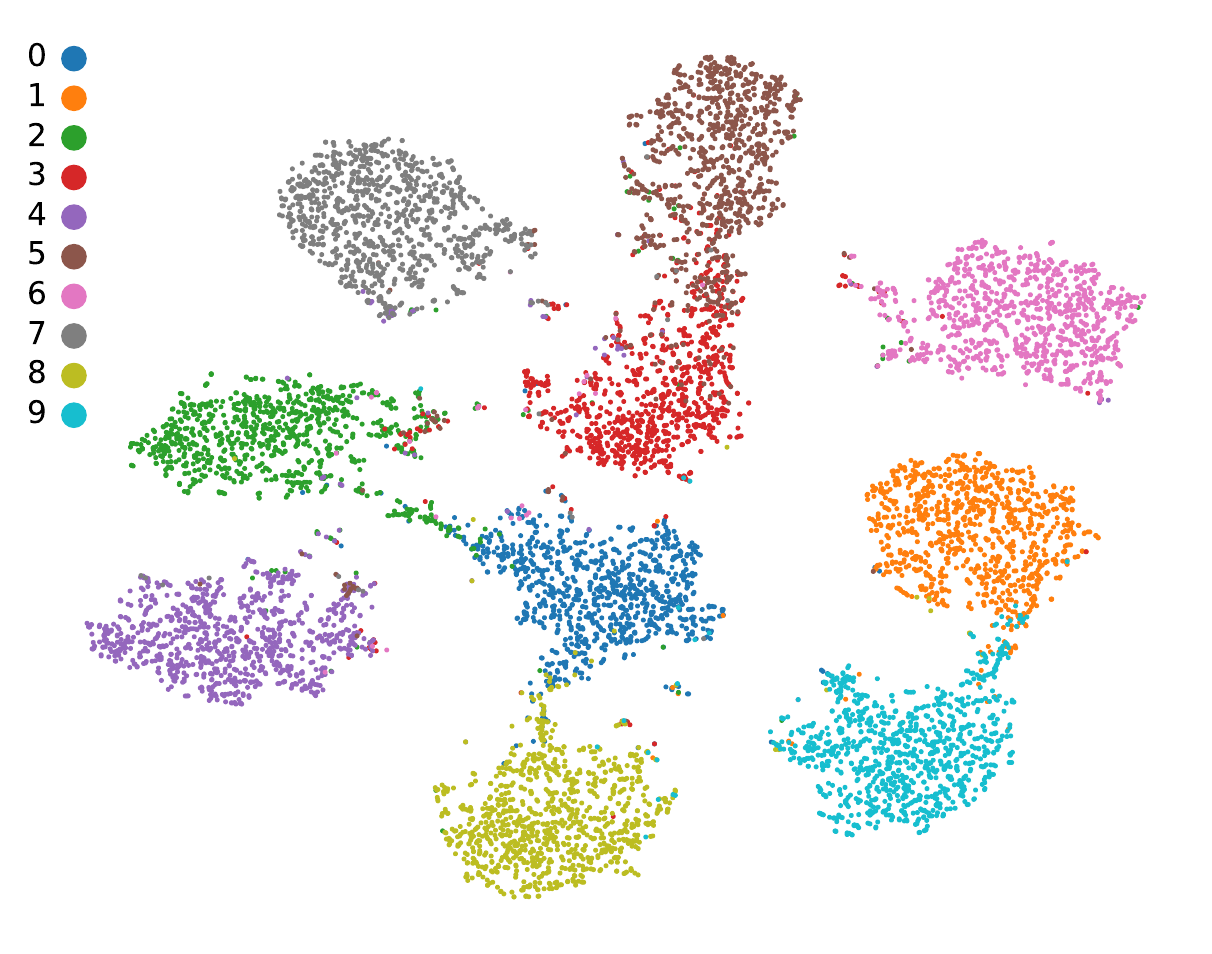}}
    \subfigure[Asym-40\%]{\includegraphics[width=0.32\textwidth]{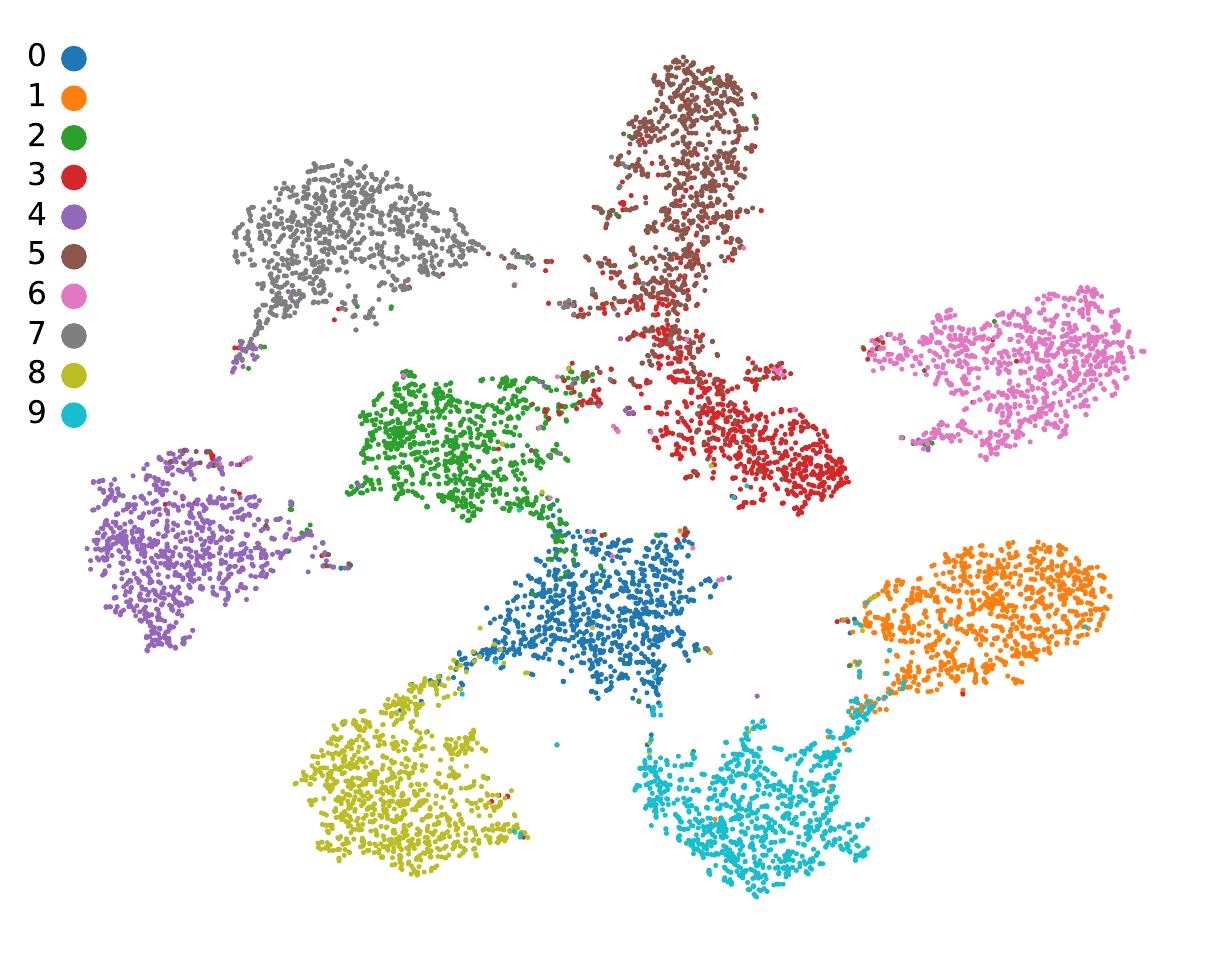}}

    \caption{Models were trained using ALR on the CIFAR-10 training set with different levels of symmetric and asymmetric label noise, and the clustering performance of these models on the CIFAR-10 test set was visualized using 2D t-SNE.}
    \label{fig3}
\end{figure}

To evaluate ALR's effectiveness in enhancing the model's fit to clean samples, we compare it with a basic label refinement method that uses only standard cross-entropy loss. This simplest label refinement (LR) method serves as a baseline for comparison with ALR under noisy label conditions. To this end, we performed experiments on the CIFAR-10 training set under three different noise settings: Sym-40\%, Sym-60\%, and Asym-40\%. 

In Figure \ref{fig2}, we use 2D t-SNE to visualize the fitting of clean samples in the training set by models using LR and ALR, respectively. By comparing the sample aggregation in Figure \ref{fig2} (a), (b), (c) and (d), (e), (f), it is evident that ALR facilitates closer aggregation of similar samples, while also enhancing the separation between different categories. This demonstrates that ALR effectively addresses the underfitting of clean samples, which is a common issue in label refurbishment methods.

We train ALR on the CIFAR-10 dataset with different levels of label noise and used 2D t-SNE to visualize its clustering performance on the test set, as illustrated in Figure \ref{fig3}. The visualizations show that ALR preserves well-defined separations between categories, even when the training data contains noisy labels. These results demonstrate ALR’s ability to effectively mitigate overfitting to incorrect labels while enhancing its learning from clean samples.

\subsection{Ablation Experiments}\label{subsec55}
We perform ablation studies on multiple datasets with different noise conditions to gain deeper insights into the contributions of the two components of ALR. Removing the entropy loss regularization term is denoted as ``w/o EL,'' and removing both the entropy loss regularization term and label refurbishment is denoted as ``w/o EL+LR.'' Table \ref{tab_ablation10} and \ref{tab_ablation100} present the classification accuracy of ALR and its two variants on CIFAR-10 and CIFAR-100 under varying levels of label noise. Both tables indicate that removing the entropy loss regularization term results in a performance decline, highlighting its critical role in enhancing the model’s fit to clean samples.

\begin{table}[htbp]
    \caption{Test classification accuracy from ablation study results on ALR, conducted on the CIFAR-10 with label noise.}\label{tab_ablation10}
    \resizebox{\textwidth}{!}{
    \begin{tabular}{@{}lccccccc@{}}
    \toprule
    & \multicolumn{7}{c}{\textbf{CIFAR-10}} \\
    \midrule
    \multirow{2}{*}{\textbf{Method}} & \multirow{2}{*}{\textbf{No noise}} & \multicolumn{4}{c}{\textbf{Symm}} & \multicolumn{2}{c}{\textbf{Asymm}} \\
    \cmidrule(lr){3-6} \cmidrule(l){7-8}
    & & 20\% & 40\% & 60\% & 80\% & 20\% & 40\% \\
    \midrule
    ALR& 95.07$\pm$0.14 &93.65$\pm$0.09 & 91.89$\pm$0.20 & 89.00$\pm$0.23 & 78.47$\pm$0.6 & 94.40$\pm$0.14 & 92.84$\pm$0.07 \\
    ALR w/o EL& 94.87$\pm$0.16 & 93.35$\pm$0.11 & 90.89$\pm$0.09 & 87.45$\pm$0.36 & 74.24$\pm$0.21 & 93.98$\pm$0.08 & 90.82$\pm$0.08 \\
    ALR w/o EL+LR& 94.85$\pm$0.11 & 84.43$\pm$0.35 & 66.88$\pm$0.56 & 45.55$\pm$0.98 & 26.29$\pm$1.22 & 87.53$\pm$0.39 & 76.83$\pm$0.76 \\
    \bottomrule
\end{tabular}
}
\end{table}

\setlength{\tabcolsep}{3pt} % 设置列间距为4pt
\begin{table}[htbp]
    \caption{Test classification accuracy from ablation study results on ALR, conducted on the CIFAR-100 with label noise.}\label{tab_ablation100}
    \resizebox{\textwidth}{!}{
    \begin{tabular}{@{}lccccccc@{}}
    \toprule
    & \multicolumn{7}{c}{\textbf{CIFAR-100}} \\
    \midrule
    \multirow{2}{*}{\textbf{Method}} & \multirow{2}{*}{\textbf{No noise}} & \multicolumn{4}{c}{\textbf{Symm}} & \multicolumn{2}{c}{\textbf{Asymm}} \\
    \cmidrule(lr){3-6} \cmidrule(l){7-8}
    & & 20\% & 40\% & 60\% & 80\% & 20\% & 40\% \\
    \midrule
    ALR & 78.10$\pm$0.35 & 74.50$\pm$0.19 & 69.80$\pm$0.38 & 62.56$\pm$0.36 & 31.96$\pm$0.54 & 77.06$\pm$0.13 & 75.02$\pm$0.18 \\
    ALR w/o EL & 77.72$\pm$0.37 & 73.12$\pm$0.21 & 67.88$\pm$0.31 & 59.62$\pm$0.68 & 31.73$\pm$0.50 & 76.24$\pm$0.25 & 67.85$\pm$0.68 \\
    ALR w/o EL+LR & 76.28$\pm$0.17 & 63.85$\pm$0.21 & 49.42$\pm$0.75 & 31.10$\pm$0.19 & 9.13$\pm$0.14 & 63.85$\pm$0.16 & 46.53$\pm$0.13 \\
    \bottomrule
\end{tabular}
}
\end{table}
Furthermore, when label refurbishment is removed, the model effectively reverts to a conventional approach that relies solely on cross-entropy loss. The results demonstrate that eliminating label refurbishment leads to a significant decline in performance, underscoring its crucial role in preventing the fitting of incorrectly labeled samples.

Remarkably, even without label noise, ALR achieves the highest classification accuracy, surpassing the model trained solely with cross-entropy loss. This enhancement is attributed to ALR's use of soft labels as the learning target, which helps mitigate overfitting to low-quality images. These findings highlight the robustness of ALR, demonstrating its ability to outperform traditional cross-entropy loss across both clean datasets and those with label noise.

\subsection{Parameter Sensitivity}\label{subsec56}

\subsubsection{Sensitivity Analysis of the weight coefficient \texorpdfstring{$\alpha$}{alpha}}\label{subsubsec1}

In ALR, the hyperparameter $\alpha$ controls the entropy loss regularization. A large $\alpha$ can cause overconfident predictions and lead to "lazy" learning. Therefore, it is advisable to set $\alpha$ to a smaller value.

To investigate the impact of the hyperparameter $\alpha$, we perform 
experiments on CIFAR-10 and CIFAR-100 under 40\% symmetric and asymmetric noise. Figure \ref{fig_alpha} presents the results of experiments exploring a range of $\alpha$ values, including {0.1, 0.2, 0.5, 1.0}. The findings reveal that changes in the $\alpha$ value have a negligible effect on model performance across various datasets and noise conditions. This robustness underscores the method's practicality and broad applicability. Consequently, we set $\alpha$ to 0.2 for all experiments, including the three real-world noisy datasets. 

\begin{figure}[htbp]
    \centering
    \subfigure[C10 Sym-40\%]{\includegraphics[width=0.24\textwidth]{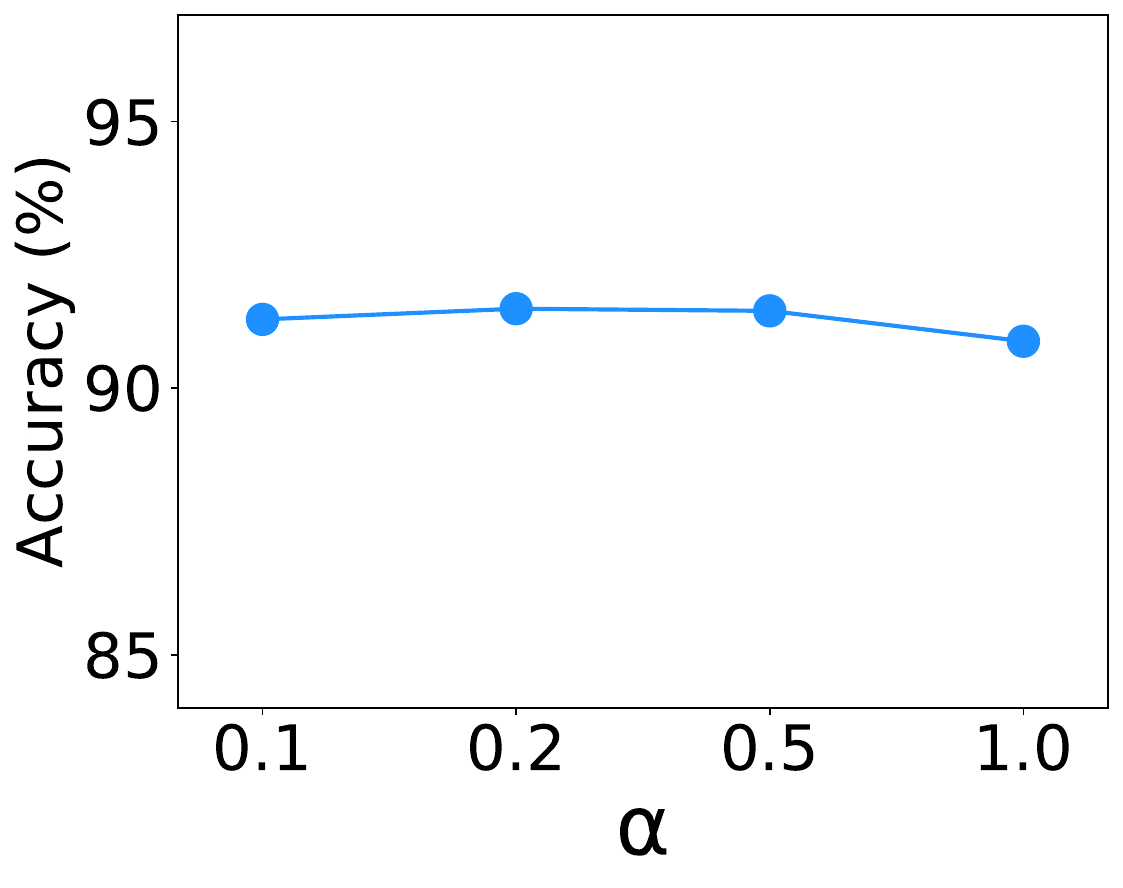}}
    \subfigure[C10 Asym-40\%]{\includegraphics[width=0.24\textwidth]{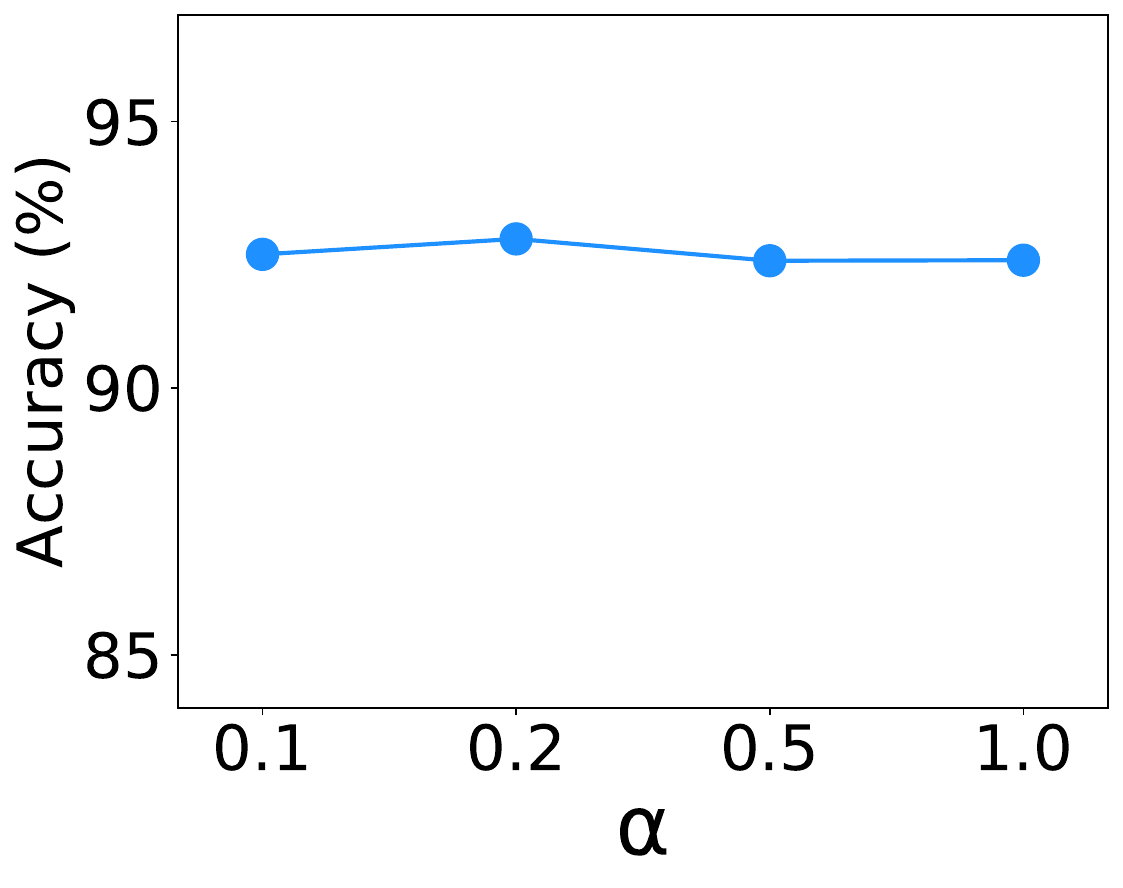}}
    \subfigure[C100 Sym-40\%]{\includegraphics[width=0.24\textwidth]{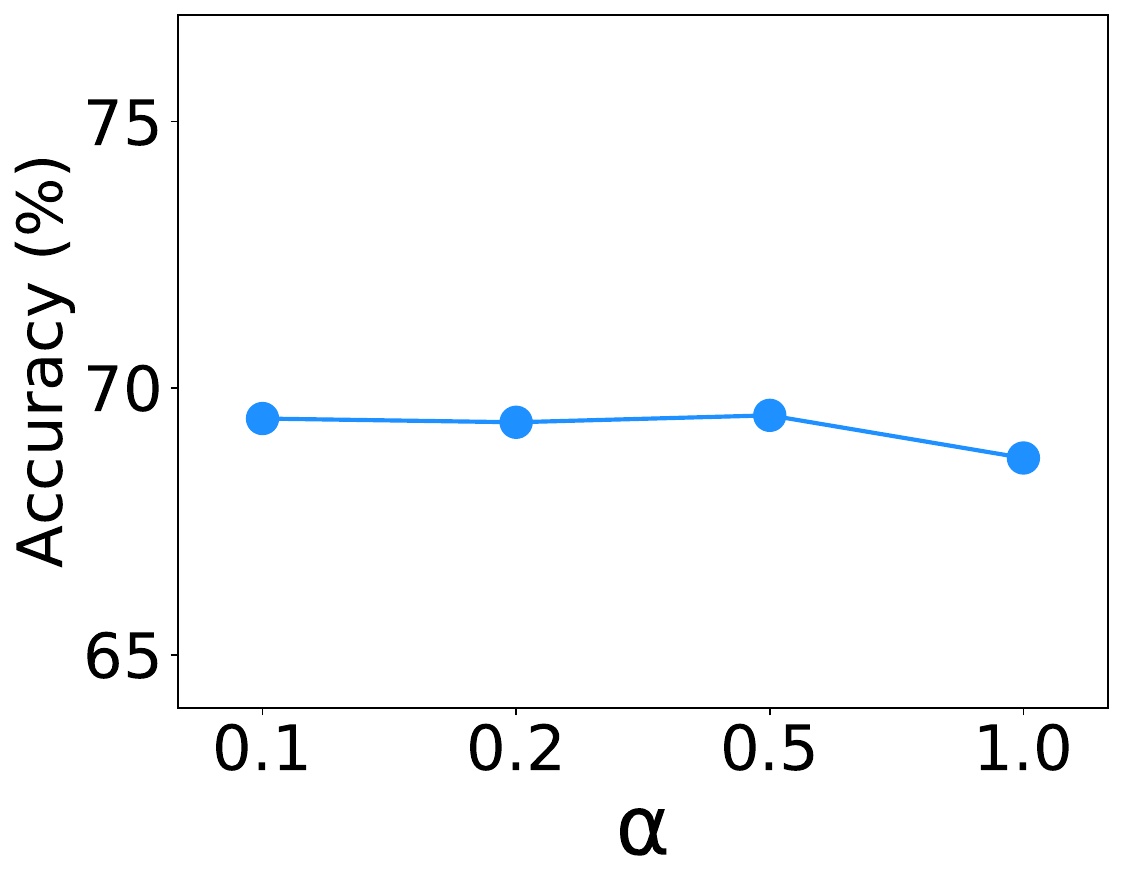}}
    \subfigure[C100 Asym-40\%]{\includegraphics[width=0.241\textwidth]{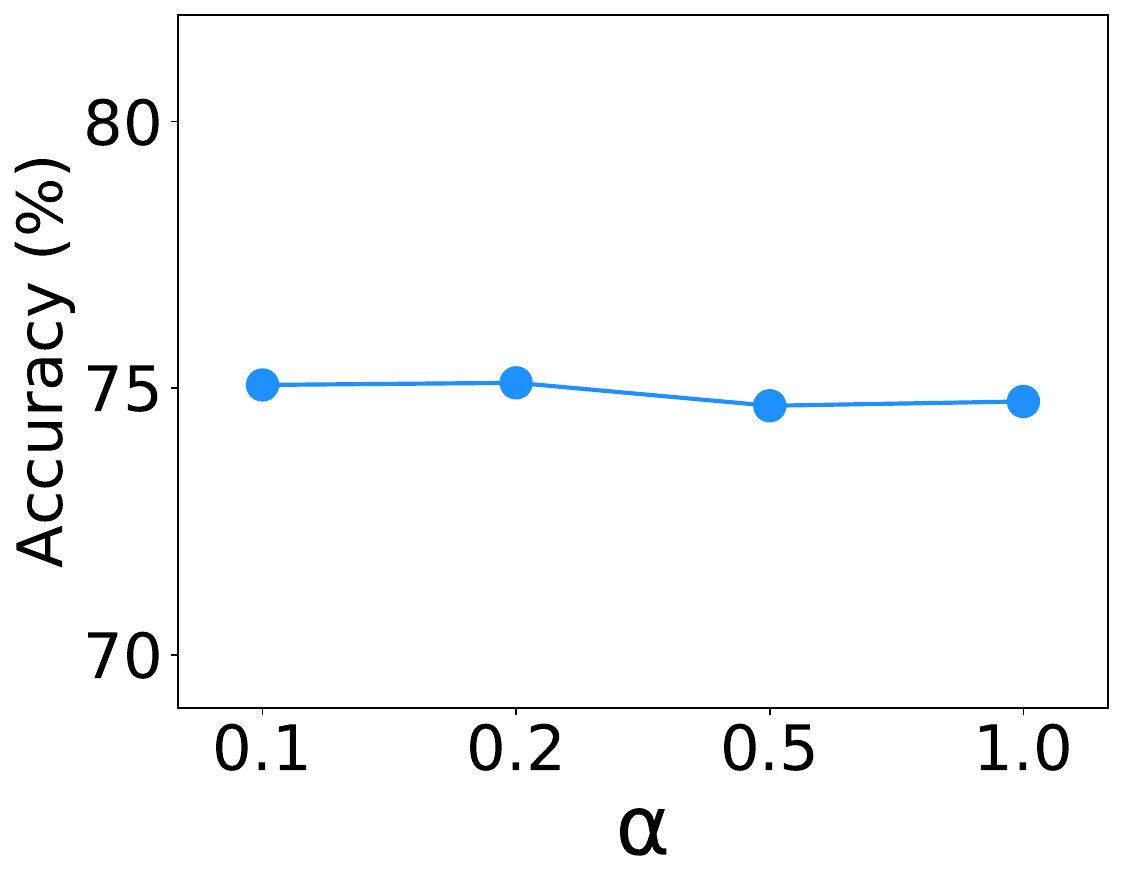}}
    
    \caption{Impact of the entropy loss regularization weight coefficient $\alpha$ on test classification accuracy on CIFAR-10 (C10) and CIFAR-100 (C100) under 40\% symmetric and asymmetric label noise.}
    \label{fig_alpha}
\end{figure}

\subsubsection{Sensitivity Analysis of the warm-up phase (\texorpdfstring{$m$}{m} epochs)}\label{subsubsec2}

The training process begins with a warm-up phase lasting $m$ epochs, which is designed to establish a reasonably well-trained model before initiating label refinement. If the warm-up duration is too short,  the model’s predictions may become unreliable. Conversely, an excessively long warm-up phase increases the risk of overfitting to noisy data. To examine the impact of $m$ on model performance, we conducted sensitivity experiments on CIFAR-10 and CIFAR-100 under 40\% symmetric and asymmetric noise conditions. As the warm-up phase typically takes place early in training, we evaluated several $m$ values before the first scheduled learning rate adjustment at epoch 40, specifically testing $\{20, 25, 30, 35, 40\}$.

The results in Figure \ref{fig_m} show that model performance remains largely stable despite changes in $m$ across different datasets and noise conditions. This resilience is likely due to ALR’s capacity to prioritize the learning of clean samples in the later stages of training, effectively minimizing performance differences associated with different $m$ values.

\begin{figure}[htbp]
    \centering
    \subfigure[C10 Sym-40\%]{\includegraphics[width=0.24\textwidth]{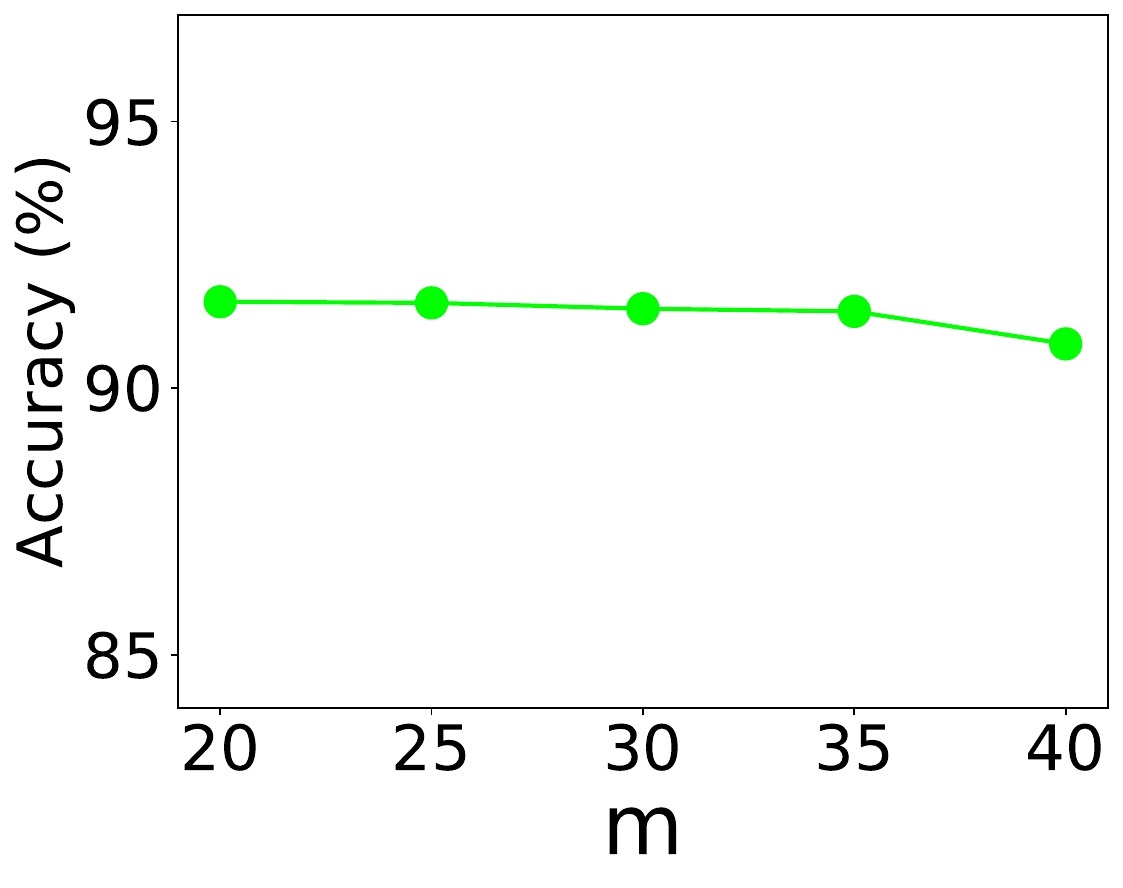}}
    \subfigure[C10 Asym-40\%]{\includegraphics[width=0.24\textwidth]{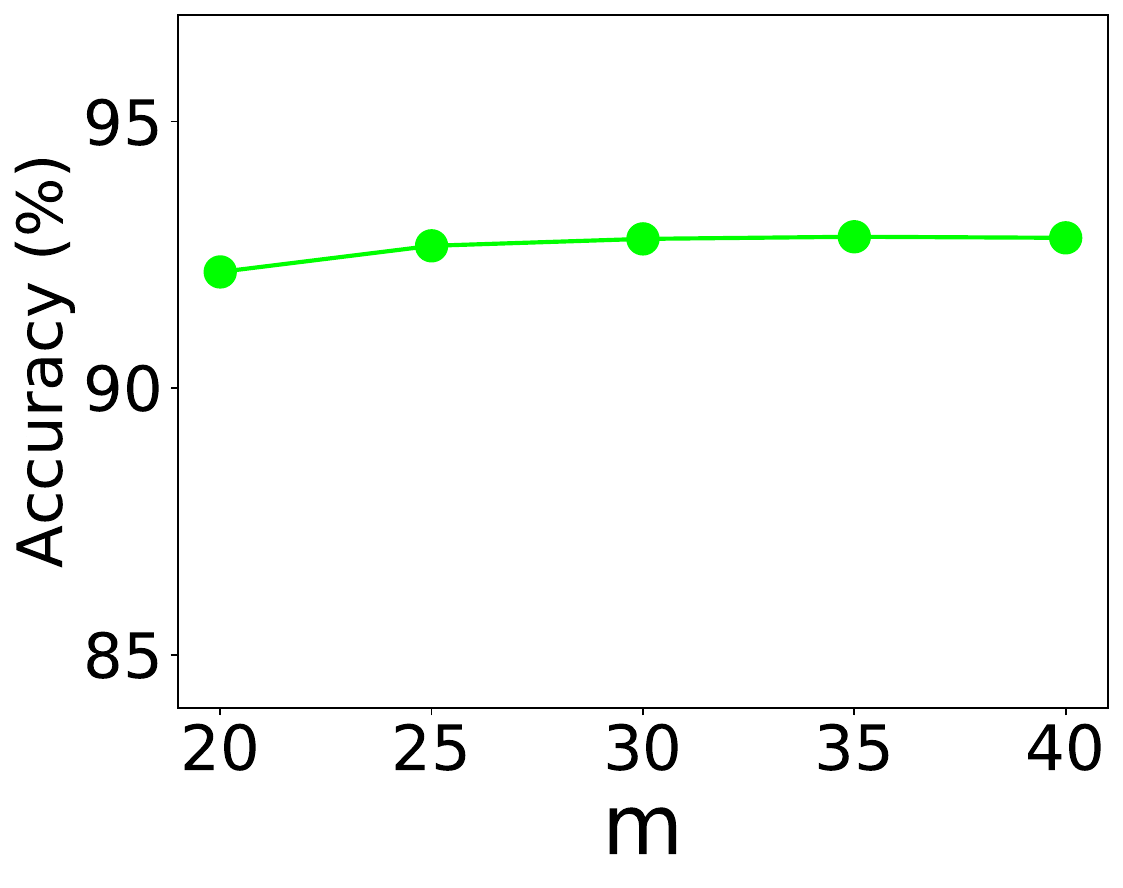}}
    \subfigure[C100 Sym-40\%]{\includegraphics[width=0.24\textwidth]{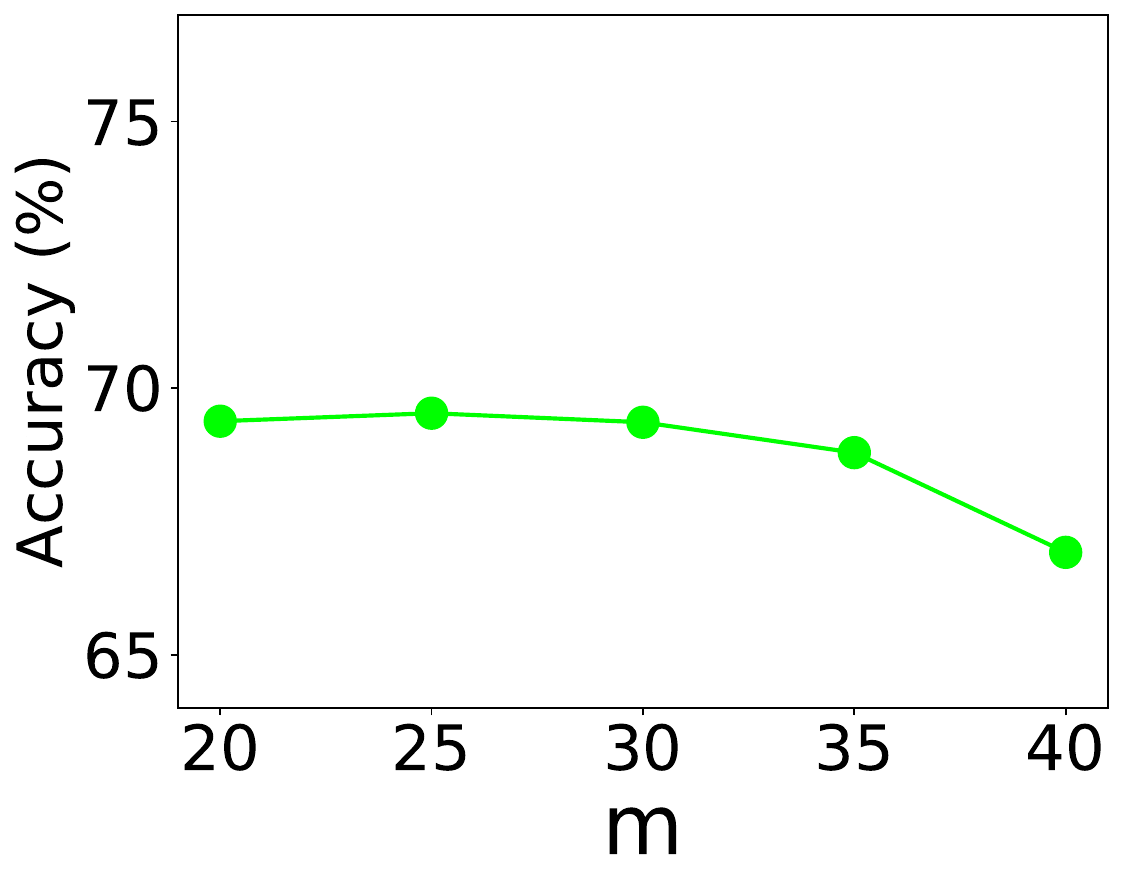}}
    \subfigure[C100 Asym-40\%]{\includegraphics[width=0.24\textwidth]{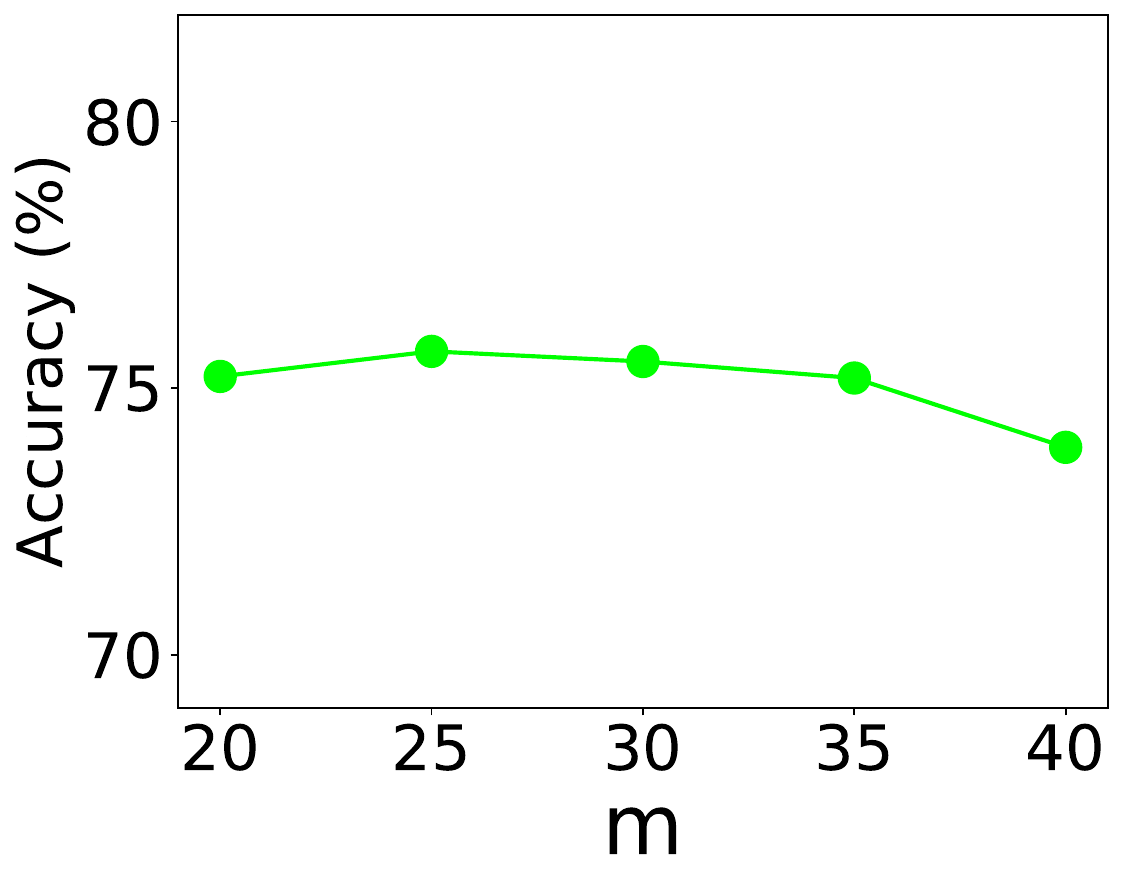}}
    \caption{Impact of the momentum hyperparameter $m$ on test classification accuracy on CIFAR-10 (C10) and CIFAR-100 (C100) under 40\% symmetric and asymmetric label noise.}
    \label{fig_m}
\end{figure}

\subsubsection{Sensitivity Analysis of the momentum hyperparameter \texorpdfstring{$\lambda$}{lambda}}\label{subsubsec3}
To evaluate the influence of the momentum parameter $\lambda$ in target estimation on model performance, we perform sensitivity analyses under 40\% symmetric and asymmetric noise conditions on CIFAR-10 and CIFAR-100. The parameter $\lambda$ was tested with values ${0.7, 0.8, 0.9, 0.95}$, and the corresponding results are illustrated in Figure \ref{fig_lambda}.

Figure \ref{fig_lambda} reveals that CIFAR-10 is less sensitive to changes in $\lambda$ compared to CIFAR-100. For CIFAR-10, model performance remains stable across different noise types, while for CIFAR-100, performance declines notably when $\lambda$ approaches 0.95, particularly under asymmetric noise. This indicates that excessive momentum can negatively impact learning in more complex datasets. Overall, $\lambda = 0.9$ achieves a good balance, ensuring robust performance across various datasets and noise conditions. Therefore, we fix $\lambda$ at 0.9 for all datasets.

\begin{figure}[htbp]
    \centering
    \subfigure[C10 Sym-40\%]{\includegraphics[width=0.24\textwidth]{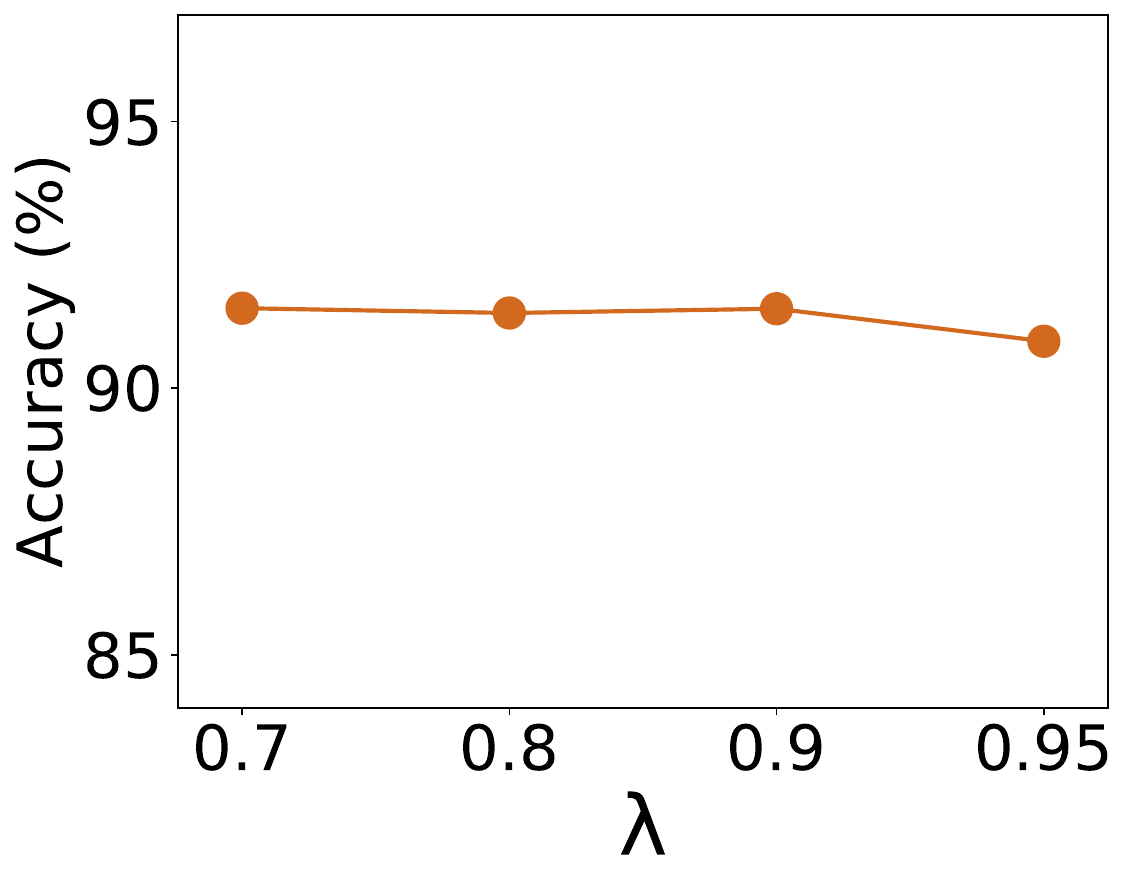}}
    \subfigure[C10 Asym-40\%]{\includegraphics[width=0.24\textwidth]{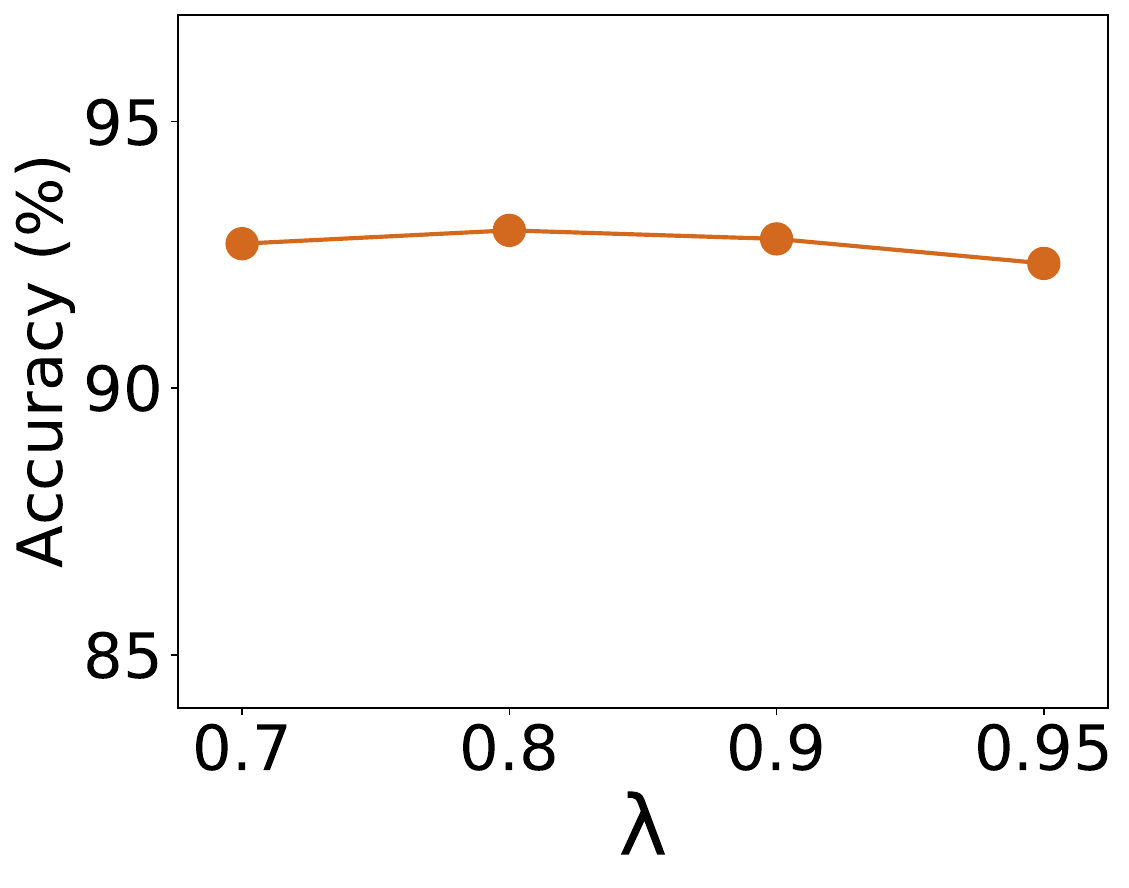}}
    \subfigure[C100 Sym-40\%]{\includegraphics[width=0.24\textwidth]{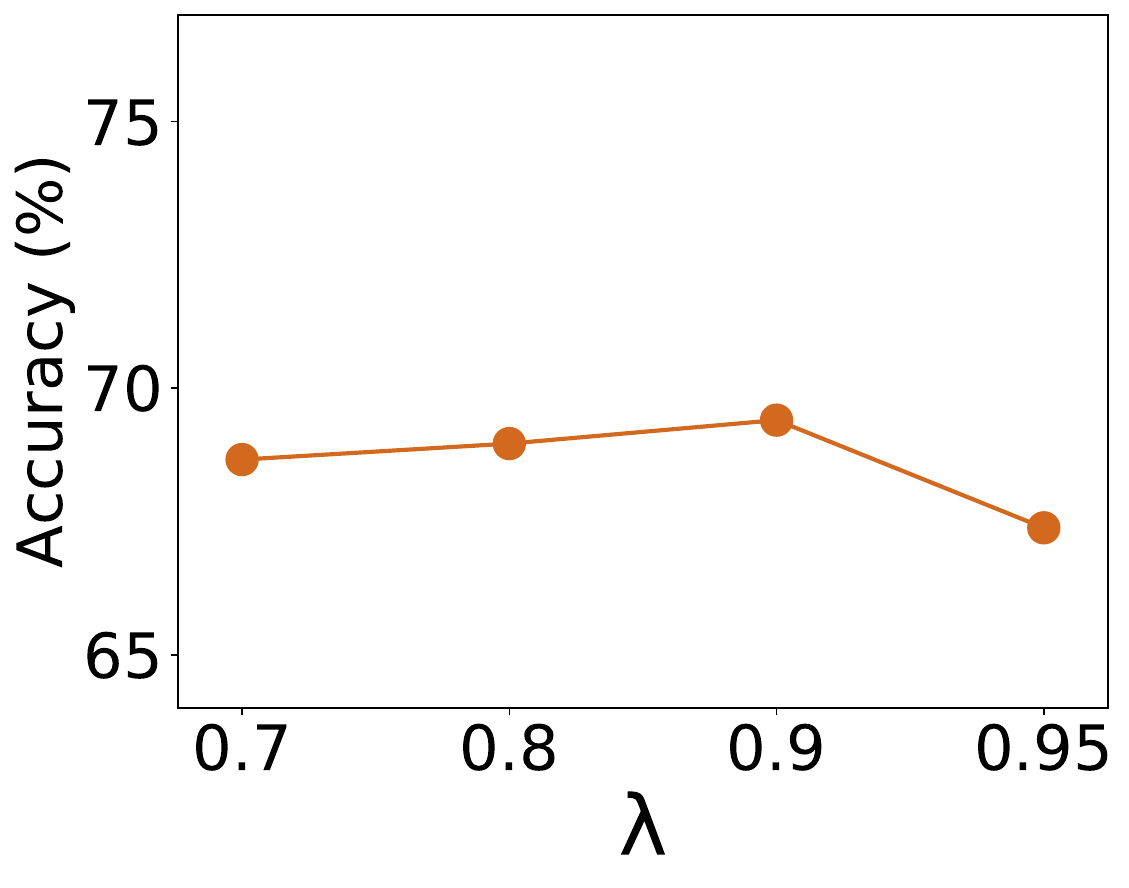}}
    \subfigure[C100 Asym-40\%]{\includegraphics[width=0.24\textwidth]{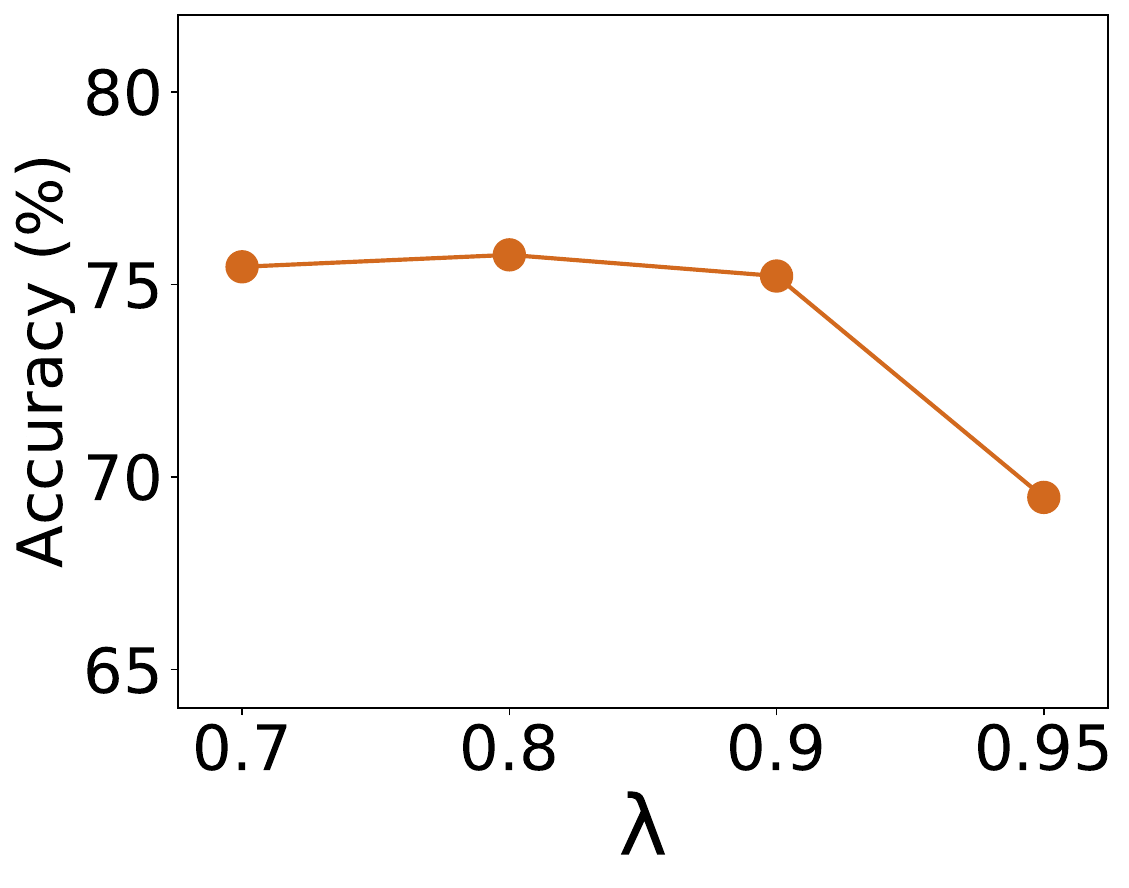}}
    
    \caption{Impact of the warm-up phase of $\lambda$ epochs on test classification accuracy on CIFAR-10 (C10) and CIFAR-100 (C100) under 40\% symmetric and asymmetric label noise.}
    \label{fig_lambda}
\end{figure}

\section{Conclusion}\label{sec6}
We propose Adaptive Label Refinement (ALR) to address the challenge of label noise in classification tasks. ALR employs temporal ensembling to transform original noisy labels into soft labels and introduces an entropy loss regularization term to adaptively enhance the learning of clean samples. ALR effectively alleviates the negative impact of noisy labels while also reducing underfitting on clean samples. Extensive experiments on both synthetic and real-world datasets demonstrate its robustness and effectiveness across diverse scenarios. Unlike conventional methods that rely on explicitly distinguishing clean samples from mislabeled ones, ALR treats both types of samples in a unified framework. This characteristic makes ALR a straightforward, versatile, and practical solution for addressing label noise challenges.

\section*{CRediT authorship contribution statement}
\textbf{Wenzhen Zhang}: Visualization, Writing – original draft, Methodology, Software, Investigation. \textbf{Debo Cheng}: Writing – review \& editing, methodology, Project administration, Conceptualization. \textbf{Guangquan Lu}: Data curation, Resources, Validation. \textbf{Bo Zhou}: Validation, Formal analysis. \textbf{Jiaye Li}: Investigation, Visualization, Software. \textbf{Shichao Zhang}: Writing – review \& editing, Supervision, Funding acquisition.

\section*{Declaration of competing interest}
The authors declare that they have no known competing financial interests or personal relationships that could have appeared to influence the work reported in this paper.

\section*{Acknowledgments}
This work was supported in part by the Project of Guangxi Science and Technology (GuiKeAB23026040), Innovation Project of Guangxi Graduate Education (YCBZ2023061), Research Fund of Guangxi Key Lab of Multi-source Information Mining \& Security (24-A-01-02).

\section*{Data availability}
Data will be made available on request.

\bibliographystyle{elsarticle-harv} 
\bibliography{elsarticle}

%% else use the following coding to input the bibitems directly in the
%% TeX file.

%% Refer following link for more details about bibliography and citations.
%% https://en.wikibooks.org/wiki/LaTeX/Bibliography_Management

% \begin{thebibliography}{00}

% %% For authoryear reference style
% %% \bibitem[Author(year)]{label}
% %% Text of bibliographic item

% \bibitem[Lamport(1994)]{lamport94}
%   Leslie Lamport,
%   \textit{\LaTeX: a document preparation system},
%   Addison Wesley, Massachusetts,
%   2nd edition,
%   1994.

% \end{thebibliography}

\end{document}